\documentclass[10pt,twocolumn,letterpaper]{article}

\usepackage[pagenumbers]{cvpr} 

\usepackage{multirow}
\usepackage{savesym}
\usepackage{bbding}
\usepackage{pifont}
\usepackage{amsmath,amsfonts}
\usepackage{amsthm}
\usepackage{algorithm}
\usepackage{algpseudocode}
\usepackage{xcolor}
\usepackage{colortbl}
\definecolor{TRColor}{RGB}{240, 240, 240}
\usepackage{tabulary}
\usepackage{tabularx}
\usepackage[accsupp]{axessibility}  

\savesymbol{Cross} %
\usepackage{marvosym} 

\newcommand{\X}{\mathcal{X}}
\newcommand{\Y}{\mathcal{Y}}
\newcommand{\XY}{\mathcal{XY}}
\newcommand{\YX}{\mathcal{YX}}

\newtheorem{theorem}{Theorem}[section]
\newtheorem{definition}[theorem]{Definition}

\newtheorem*{unnumberedthm}{Theorem}

\definecolor{cvprblue}{rgb}{0.21,0.49,0.74}
\usepackage[pagebackref,breaklinks,colorlinks,allcolors=cvprblue]{hyperref}

\def\confName{CVPR}
\def\confYear{2026}

\title{From Feature Learning to Spectral Basis Learning: A Unifying and Flexible Framework for Efficient and Robust Shape Matching}

\author{
    Feifan Luo\textsuperscript{\rm 1} \quad
    Hongyang Chen\textsuperscript{\rm 2}\thanks{Corresponding author.} \\
    \textsuperscript{1}\, College of Computer Science and Technology, Zhejiang University, China \quad \\
    \textsuperscript{2}\, Research Center for   Computational Earth and Space Science, Zhejiang Lab, China \\
    {\tt\small luoff@zju.edu.cn}\quad {\tt\small dr.h.chen@ieee.org}
}

\begin{document}
\maketitle

\begin{abstract}
Shape matching is a fundamental task in computer graphics and vision, with deep functional maps becoming a prominent paradigm. However, existing methods primarily focus on learning informative feature representations by constraining pointwise and functional maps, while neglecting the optimization of the spectral basis—a critical component of the functional map pipeline. This oversight often leads to suboptimal matching results. Furthermore, many current approaches rely on conventional, time-consuming functional map solvers, incurring significant computational overhead. To bridge these gaps, we introduce Advanced Functional Maps, a framework that generalizes standard functional maps by replacing fixed basis functions with learnable ones, supported by rigorous theoretical guarantees. Specifically, the spectral basis is optimized through a set of learned inhibition functions. Building on this, we propose the first unsupervised spectral basis learning method for robust non-rigid 3D shape matching, enabling the joint, end-to-end optimization of feature extraction and basis functions. Our approach incorporates a novel heat diffusion module and an unsupervised loss function, alongside a streamlined architecture that bypasses expensive solvers and auxiliary losses. Extensive experiments demonstrate that our method significantly outperforms state-of-the-art feature-learning approaches, particularly in challenging non-isometric and topological noise scenarios, while maintaining high efficiency. Finally, we reveal that optimizing basis functions is equivalent to spectral convolution, where inhibition functions act as filters. This insight enables enhanced representations inspired by spectral graph networks, opening new avenues for future research. Our code is available at \url{https://github.com/LuoFeifan77/Unsupervised-Spectral-Basis-Learning}.
\end{abstract}
    
\section{Introduction}
\label{sec:intro}
Establishing meaningful correspondences between non-rigid shapes is a cornerstone of geometry processing and computer vision, facilitating a myriad of applications such as deformation transfer~\cite{Sumner2004}, shape interpolation~\cite{Eisenberger2021}, and statistical shape analysis~\cite{Bogo2014FAUST}. Recently, the advent of deep learning has catalyzed the development of various data-driven matching frameworks, most notably those centered on the functional map representation~\cite{Ovsjanikov2012}. FMNet~\cite{litany2017deep} pioneered this paradigm by leveraging learned descriptors to compute functional maps in a differentiable manner. Building upon this, subsequent state-of-the-art (SOTA) studies have introduced diverse regularization strategies to enhance feature expressivity, either by constraining pointwise and functional mappings independently~\cite{halimi2019unsupervised,Ayguen2020,Donati2022,li2022learning} or via joint optimization~\cite{attaiki2023understanding,Cao2023,cao2024revisiting,cao2024synchronous,luo2025deep}. These methods typically employ advanced backbones, such as DiffusionNet~\cite{sharp2022diffusionnet}, to extract high-quality spectral features (see~\cref{fig2: DeepFMaps_architecture}).

However, these single-feature learning approaches neglect the optimization of the spectral basis—an essential yet often overlooked component in the (deep) functional map pipeline, consequently leading to suboptimal performance in challenging matching scenarios. Furthermore, existing methods such as \cite{Cao2023, luo2025deep} rely heavily on standard functional map solvers and multiple auxiliary losses, resulting in complex training paradigms and low computational efficiency. 

To address these limitations, we systematically propose Advanced Functional Maps, a framework that extends conventional functional maps \cite{Ovsjanikov2012} by transitioning from fixed to learnable basis functions. These learnable bases consist of both Laplacian eigenfunctions and a set of inhibition functions, where basis expressiveness is enhanced by learning the inhibition functions in a data-driven manner. Building on this framework, we introduce the first end-to-end, unsupervised spectral basis learning method that simultaneously optimizes both shape feature representations and basis functions for non-rigid 3D shape matching. A detailed comparison of methods can be found in Tab.~\ref{tab: others V.S. Ours}. Our method includes: (1) a multi-scale, eigenvalue-agnostic heat diffusion method yields learned basis functions with \textit{minimal} parameters; (2) a multi-resolution unsupervised spectral loss that jointly supervises pointwise maps, functional maps, and basis functions; and (3) the streamlined learning architecture without the computationally intensive functional maps solver or auxiliary functional map losses. Extensive experiments demonstrate that our method not only achieves state-of-the-art performance across various challenging benchmarks but also maintains remarkable computational efficiency. Furthermore, we reveal that optimizing basis functions is equivalent to performing spectral convolution, where inhibition functions act as filters. This theoretical connection enables the design of superior basis representations inspired by spectral graph/manifold convolution techniques, opening new avenues for future research. Our main contributions are summarized as follows:
\begin{itemize}
\item We propose Advanced Functional Maps, a generalized framework where spectral basis functions are enhanced through a set of learnable inhibition functions.
\item Based on this framework, we present the first unsupervised spectral basis learning method for non-rigid 3D shape matching, enabling joint optimization of features and bases.
\item We demonstrate through extensive experiments that our approach significantly outperforms existing feature-learning methods in both accuracy and computational efficiency across challenging benchmarks.
\item We establish a formal equivalence between basis optimization and filter design in spectral GNNs, providing a theoretical foundation for future explorations in basis learning.
\end{itemize}

\begin{table}[h!t]
\centering
\caption{Method comparison. Our method is the first unsupervised spectral basis learning approach tailored specifically for non-rigid 3D shape matching, incorporating a unique set of properties that enhance its performance. Where FL: \underline{f}eature \underline{l}earning. BL: \underline{b}asis \underline{l}earning. FU: \underline{f}ull \underline{u}nsupervised. WoFS: \underline{w}ith\underline{o}ut \underline{f}unctional map \underline{s}olver. SL: \underline{s}ingle unsupervised \underline{l}oss for networks.} 
\scalebox{0.9}{
\begin{tabular}{lcccccccccccccc}
\toprule
            & \multicolumn{2}{c}{FL}  & \multicolumn{2}{c}{BL}  & \multicolumn{2}{c}{FU} & \multicolumn{2}{c}{WoFS} & \multicolumn{2}{c}{SL} \\ \midrule

FMNet~\cite{litany2017deep}  & \multicolumn{2}{c}{\color{green}\ding{51}}   & \multicolumn{2}{c}{\color{red}\ding{55}} & \multicolumn{2}{c}{\color{red}\ding{55}} & \multicolumn{2}{c}{\color{red}\ding{55}} & \multicolumn{2}{c}{-}\\ 
            
GeomFmaps~\cite{donati2020deep}  & \multicolumn{2}{c}{\color{green}\ding{51}}   & \multicolumn{2}{c}{\color{red}\ding{55}} & \multicolumn{2}{c}{\color{red}\ding{55}} & \multicolumn{2}{c}{\color{red}\ding{55}}  & \multicolumn{2}{c}{-}\\ 

DiffMaps~\cite{marin2020correspondence} & \multicolumn{2}{c}{\color{green}\ding{51}} & \multicolumn{2}{c}{\color{green}\ding{51}}  & \multicolumn{2}{c}{\color{red}\ding{55}}  & \multicolumn{2}{c}{\color{green}\ding{51}} & \multicolumn{2}{c}{-} \\ 

DUO-FMNet~\cite{donati2022deep}    & \multicolumn{2}{c}{\color{green}\ding{51}}   & \multicolumn{2}{c}{\color{red}\ding{55}}  & \multicolumn{2}{c}{\color{green}\ding{51}}  & \multicolumn{2}{c}{\color{red}\ding{55}} & \multicolumn{2}{c}{\color{red}\ding{55}} \\ 

AttentiveFMaps~\cite{li2022learning}    & \multicolumn{2}{c}{\color{green}\ding{51}}   & \multicolumn{2}{c}{\color{red}\ding{55}}  & \multicolumn{2}{c}{\color{green}\ding{51}}  & \multicolumn{2}{c}{\color{red}\ding{55}} & \multicolumn{2}{c}{\color{red}\ding{55}}\\ 

RFMNet~\cite{HU2023101189}  & \multicolumn{2}{c}{\color{green}\ding{51}}  & \multicolumn{2}{c}{\color{red}\ding{55}}  & \multicolumn{2}{c}{\color{green}\ding{51}}  & \multicolumn{2}{c}{\color{green}\ding{51}} & \multicolumn{2}{c}{\color{green}\ding{51}}\\ 

ULRSSM~\cite{Cao2023}  & \multicolumn{2}{c}{\color{green}\ding{51}} & \multicolumn{2}{c}{\color{red}\ding{55}}  & \multicolumn{2}{c}{\color{green}\ding{51}} & \multicolumn{2}{c}{\color{red}\ding{55}} & \multicolumn{2}{c}{\color{red}\ding{55}}\\ 

DiffZO~\cite{magnet2024memory}   & \multicolumn{2}{c}{\color{green}\ding{51}}  & \multicolumn{2}{c}{\color{red}\ding{55}}  & \multicolumn{2}{c}{\color{green}\ding{51}} & \multicolumn{2}{c}{\color{green}\ding{51}} & \multicolumn{2}{c}{\color{red}\ding{55}}\\ 

HybridFmaps~\cite{bastian2024hybrid}  & \multicolumn{2}{c}{\color{green}\ding{51}}  & \multicolumn{2}{c}{\color{red}\ding{55}}  & \multicolumn{2}{c}{\color{green}\ding{51}}  & \multicolumn{2}{c}{\color{red}\ding{55}} & \multicolumn{2}{c}{\color{red}\ding{55}}\\ 

DeepFAFM~\cite{luo2025deep}  & \multicolumn{2}{c}{\color{green}\ding{51}}  & \multicolumn{2}{c}{\color{red}\ding{55}}  & \multicolumn{2}{c}{\color{green}\ding{51}}  & \multicolumn{2}{c}{\color{red}\ding{55}} & \multicolumn{2}{c}{\color{red}\ding{55}}\\

DenoiseFmaps~\cite{zhuravlev2025denoising}  & \multicolumn{2}{c}{\color{green}\ding{51}}  & \multicolumn{2}{c}{\color{red}\ding{55}}  & \multicolumn{2}{c}{\color{green}\ding{51}}  & \multicolumn{2}{c}{-} & \multicolumn{2}{c}{\color{red}\ding{55}}\\

Ours & \multicolumn{2}{c}{\color{green}\ding{51}}   & \multicolumn{2}{c}{\color{green}\ding{51}}  & \multicolumn{2}{c}{\color{green}\ding{51}}  & \multicolumn{2}{c}{\color{green}\ding{51}} & \multicolumn{2}{c}{\color{green}\ding{51}}\\
\bottomrule
\end{tabular}}
\label{tab: others V.S. Ours}
\end{table}

\section{Related Works}
\label{sec:rw}
Non-rigid 3D shape matching is a long-standing problem that has been extensively studied. For a comprehensive overview of the field, readers are referred to surveys such as \cite{sahilliouglu2020recent,deng2022survey}. Here, we focus on approaches most similar to ours.

\subsection{Functional map methods}
The functional map framework~\cite{Ovsjanikov2012} serves as a foundational methodology for non-rigid shape matching, which has inspired extensive subsequent research~\cite{ren2018continuous, Ren2019, melzi2019matching, Huang2020, Hu2021, ren2021discrete, Gao_2021_CVPR, fan2022coherent, 2022SmoothNonRigidShapeMatchingviaEffectiveDirichletEnergyOptimization, Donati2022, 2023ElasticBasis}. These methods incorporate functional or map constraints to enhance matching accuracy and robustness. However, axiomatic functional map methods rely heavily on the quality of extrinsic~\cite{salti2014shot} and intrinsic~\cite{sun2009concise,Aubry2011The,lhllcgf2021, Liu2023} handcrafted features. Their performance deteriorates under large-scale deformations, often producing unsatisfactory results. Additionally, the traditional functional maps framework is defined on fixed basis functions, we propose a novel advanced functional maps framework, which is defined on adjustable basis functions, providing a flexible and general matching pipeline. 

\subsection{Deep functional map methods}
Deep functional map methods depart from traditional axiomatic approaches by replacing handcrafted features with data-driven feature learning. The pioneering deep functional map framework, FMNet~\cite{litany2017deep}, introduced an end-to-end learnable pipeline that enhances matching accuracy by optimizing SHOT descriptors~\cite{salti2014shot} using residual MLP layers. Building on this, subsequent works~\cite{halimi2019unsupervised, Ayguen2020} integrated structured regularizers~\cite{roufosse2019unsupervised, sharma2020weakly}—such as bijectivity and orthogonality~\cite{Ren2019}—to further enhance map representations. Recent architectural advancements have further expanded the field: two-branch designs~\cite{Cao2023, Sun_2023_ICCV, attaiki2023understanding, cao2024spectral, le2024integrating,cao2024revisiting, bastian2024hybrid,luo2025deep,cao2024synchronous} promote the \textit{properness} of functional maps~\cite{ren2021discrete}, while streamlined single-branch approaches~\cite{HU2023101189, magnet2024memory} focus on computational efficiency. More recently, the field has witnessed a surge of diverse methodologies, including Finsler metrics~\cite{weber2024finsler}, Wormhole loss~\cite{bracha2024wormhole}, neural adjoint maps~\cite{vigano2025nam}, and diffusion model-based matching paradigms~\cite{pierson2025diffumatch, zhuravlev2025denoising}, alongside innovative architectures such as SpiderMatch~\cite{roetzer2024spidermatch}, EchoMatch~\cite{xie2025echomatch}, and other studies~\cite{bensaid2023partial,bracha2024unsupervised,yona2025neural,rimon2025fridu}, collectively pushing the boundaries of non-rigid shape correspondence.
Despite these advances, existing research focuses almost exclusively on feature learning while leaving spectral basis optimization largely unexplored, leading to suboptimal performance in challenging scenarios. Moreover, heavy reliance on iterative linear solvers and complex auxiliary losses compromises computational efficiency. In contrast, we introduce the first unsupervised spectral basis learning method that jointly optimizes features and basis functions. By enabling task-specific basis adaptation, our approach significantly outperforms state-of-the-art techniques~\cite{Cao2023, magnet2024memory, bastian2024hybrid} while minimizing overhead and dependency on elaborate regularizers.

\subsection{Spectral Basis functions}
Recent advances in the field have increasingly highlighted the pivotal role of the spectral basis in enhancing both the expressive power and accuracy of matching results. Various studies have sought to improve mapping quality by constructing specialized variations of the Laplacian eigenbasis, including the coupled quasi-harmonic~\cite{Kokkinos2013Intrinsic}, local harmonic~\cite{melzi2018localized}, Hamiltonian~\cite{choukroun2018hamiltonian}, and landmark-adapted bases~\cite{panine2022non}. Others have introduced orientation-aware complex bases~\cite{Donati2022} or extrinsic crease-aware elastic bases~\cite{2023ElasticBasis, bastian2024hybrid}. Despite their efficacy, these methods are fundamentally constrained by their reliance on axiomatic and fixed basis functions, which precludes the use of data-driven strategies for further basis optimization. In the context of 3D point cloud correspondence, DiffFMaps~\cite{marin2020correspondence} proposed a supervised approach by training linear spectral embeddings and shape features independently. However, this framework relies heavily on ground-truth labels and entails a decoupled, computationally intensive training procedure. Furthermore, because its spectral embeddings are derived from disconnected point clouds, it is not directly applicable to mesh-based matching. To bridge these gaps, we introduce a novel unsupervised spectral basis learning method for mesh data. By eliminating the need for labeled data and co-optimizing feature extraction and basis functions, our approach offers a more integrated and effective paradigm for basis optimization.

\begin{figure}[h!t]
	\centering
	\includegraphics[width=1.0\linewidth]{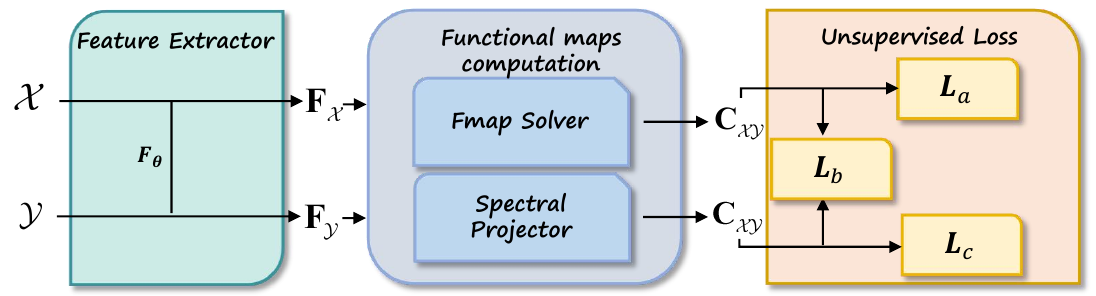}
	\caption{Existing deep functional map architectures. Dual-branch: Integrates both a functional map solver and a spectral projector to compute functional maps, supervised by unsupervised losses (e.g., orthogonality, bijectivity) and coupled with a consistency loss between the two functional representations. 
    Single-branch: Employs either a functional map solver or a spectral projector alone, regulated by corresponding unsupervised constraints.}
	\label{fig2: DeepFMaps_architecture}
\end{figure}
\section{Background and notation}\label{sec:bg}
We begin with a concise overview of the deep functional map pipeline, referring interested readers to~\cite{Ovsjanikov2012, Melzi2019, Cao2023} for a more comprehensive treatment. For clarity, the primary notations and symbols used throughout this paper are summarized in~\cref{tab: symbol desc}. Generally, deep functional map frameworks comprise five principal stages, as illustrated in~\cref{fig2: DeepFMaps_architecture}:

\begin{table}[h!t]
\centering
\caption{Summary of the symbol description.}
\scalebox{0.80}{
\begin{tabular}{lcl}
\toprule
{Symbol}     &         & Description           \\ \midrule
$\X, \Y$  &   & 3D shapes (trangle mesh) with $|V_\mathcal{X}|$, $|V_\Y|$ vertices      \\ 
$\Lambda_{\X}$  &  & $\mathbb{R}^{k \times k}$ eigenvalue matrix of shape $\X$    \\
$\Phi_{\X,k}$ &  & $\mathbb{R}^{|V_\mathcal{X}| \times k}$ eigenfunctions  matrix   \\
$\Phi^{\dagger}_{\X,k}$ &  & $\mathbb{R}^{k \times |V_\mathcal{X}|}$ Moore-Penrose inverse of $\Phi_{\X}$     \\
$M_{\X}$ &  & $\mathbb{R}^{|V_\mathcal{X}| \times k}$ mass matrix.   \\
$F_{\X}$ &  & $\mathbb{R}^{|V_\mathcal{X}| \times d}$ vertex-wise features of shape ${\X}$     \\
$\Pi_{\YX}$ &  & $\mathbb{R}^{|V_\mathcal{Y}| \times |V_\mathcal{X}| }$ pointwise map between shapes $\Y$ and $\X$    \\
$\Psi_{\X,k}$ &  & $\mathbb{R}^{|V_\mathcal{X}| \times k}$ adaptable eigenfunctions matrix   \\
$C_{\XY}$/$C^{\mathrm{A}}_{\XY}$ &  & $\mathbb{R}^{k \times k}$ standard/advanced functional maps   \\
$||\cdot||_{\mathrm{F}}$ &  & Frobenius norm \\
\bottomrule
\end{tabular}}
\label{tab: symbol desc}
\end{table}

\textbf{(i) Precomputation.} Compute the first $k$ orthonormal Laplacian eigensystem $\{\Phi_{\X,k}, \Lambda_{\X,k}\}$ and $\{\Phi_{\Y,k}, \Lambda_{\Y,k}\}$ in matrix notation for shape $\X$ and $\Y$ via generalized Laplacian eigendecomposition~\cite{pinkall1993computing}, respectively.

\textbf{(ii) Feature extraction.} The feature functions on each shape are extracted by a feature extractor network $\mathcal{F}_\Theta$ (DiffusionNet~\cite{sharp2022diffusionnet} for most), with shared parameters $\Theta$, i.e., $F_{\mathcal{X}} = \mathcal{F}_\Theta(\mathcal{X})$ and $F_{\mathcal{Y}} = \mathcal{F}_\Theta(\mathcal{Y})$, respectively.

\textbf{(iii) Functional maps computation.} The standard functional map $C_{\mathcal{XY}}$ is encoded as a $k\times k$ transfer matrix~\cite{Ovsjanikov2012} between \textit{Fourier} coefficients of $F_{\mathcal{X}}$ and $F_{\mathcal{Y}}$, which can be computed by solving the optimisation problem, i.e.,
\begin{small}
\begin{equation}\label{equ: desc and reg}
   {C}_{\mathcal{XY}} = \mathop{\arg\min}\limits_{{C}_{\mathcal{XY}}} \left\|   {C}_{\mathcal{XY}} \Phi_{\mathcal{X},k}^{\dagger}  F_{\mathcal{Y}} - \Phi_{\mathcal{Y},k}^{\dagger}  F_{\mathcal{X}} \right\|^{2}_{\mathrm{F}} + \mu E_{reg}({C}_{\mathcal{XY}}), 
\end{equation}
\end{small}
where  
$ \Phi^{\dagger}_{\mathcal{X}}=\Phi^{\top}_{\mathcal{X}}  M_{\mathcal{X}} $,  
$E_{reg} = \left\| C_{\mathcal{XY}}\Lambda_{\mathcal{X}} - \Lambda_{\mathcal{Y}} C_{\mathcal{XY}} \right\|^{2}_{\mathrm{F}}$ denotes the Laplacian commutativity constraint term, and $\mu$ is a hyperparameter. 

Unlike the aforementioned approach of solving a linear system via least squares, an alternative explicit computational method employs spectral basis projection, expressed as:
\begin{equation}\label{equ: compute C by Pi}
{C}_\mathcal{XY}=\Phi_{\mathcal{Y},k}^\dagger\Pi_\mathcal{YX}\Phi_{\mathcal{X},k},
\end{equation}
where the pointwise map $ \Pi_\mathcal{YX} \in \mathbb{R}^{|V_{\mathcal{Y}}| \times |V_{\mathcal{X}}|} $, satisfying $\Pi_\mathcal{YX} F_\mathcal{X} = F_\mathcal{Y}$. Under specific regularity conditions, the spectral and spatial formulations, i.e., Eq~\eqref{equ: desc and reg} and Eq~\eqref{equ: compute C by Pi}, are equivalent~\cite{cao2024revisiting}.

\textbf{(iv) Loss functions.} During the training stage, the feature functions are penalized by unsupervised loss terms enforcing properties such as bijectivity, orthogonality on the functional maps~\cite{Ren2019}, i.e., 
\begin{equation}\label{equ: fmap loss}
    L_{bi}  =\left\| {C}_\mathcal{XY} {C}_\mathcal{YX} -\mathrm{I}\right\|^{2}_\mathrm{F}, L_{or}  = \left\|  {C}_\mathcal{XY} {C}_\mathcal{XY}^{\top} -{I}\right\|^{2}_\mathrm{F},   
\end{equation}
or the coupling~\cite{Cao2023} between pointwise and functional maps, i.e., 
\begin{equation}\label{equ: couple loss}
    L_{co}  =\left\| {C}_\mathcal{XY} - \Phi_\mathcal{Y}^\dagger\Pi_\mathcal{YX}\Phi_\mathcal{X}\right\|^{2}_\mathrm{F}.  
\end{equation}
Additionally, the aforementioned loss terms can be extended to bidirectional constraints by incorporating reverse mapping terms.

\textbf{(v) Map computation.} During inference, 
a common strategy originates from ZoomOut~\cite{Melzi2019}, which converts the estimated functional map $ {C}_{\mathcal{XY}}$ to a pointwise map by minimize the following  energy:
\begin{equation}\label{equ:area preservation variation fixed}
E({\Pi}_{\mathcal{YX}},{C}_\mathcal{XY}) = \left\|\Phi_{\mathcal{Y},k} - {\Pi}_{\mathcal{YX}}\Phi_{\mathcal{X},k} {C}^{\top}_\mathcal{XY}\right\|^{2}_{\mathrm{F}}, 
\end{equation}
where ${C}_\mathcal{XY}=\Phi_{\mathcal{Y},k}^\dagger\Pi_\mathcal{YX}\Phi_{\mathcal{X},k}$. Finally,~\cref{equ:area preservation variation fixed} is able to be solved via nearest neighbor search~\cite{Ovsjanikov2012,Melzi2019} between the aligned spectral embeddings $\Phi_{\mathcal{Y},k}$ and $\Phi_{\mathcal{X},k}{C}_{\mathcal{XY}}^\top$, i.e., 
\begin{equation}\label{equ:nnsearch Pi}
    \Pi_{\mathcal{YX}} = \text{NS}(\Phi_{\mathcal{Y},k},\Phi_{\mathcal{X},k}{C}_{\mathcal{XY}}^\top).
\end{equation} 

As established, standard deep functional map frameworks primarily optimize shape features, i.e., $Loss(\mathcal{F}_{\Theta}(\mathcal{X}), \mathcal{F}_{\Theta}(\mathcal{Y}))$, as shown in~\cref{fig2: DeepFMaps_architecture}. Despite the fundamental role of the spectral basis in these representations, existing methods—while leveraging data-driven feature optimization, consistently treat the basis functions as fixed, non-optimized components. To bridge this gap, we first introduce Advanced Functional Maps (\cref{sec:AdvancedFM}), a generalized framework built upon learnable spectral bases. Building on this foundation, we propose a novel, lightweight unsupervised method for spectral basis learning in non-rigid 3D shape matching (\cref{sec:BasisLearning}). This approach enables the joint, end-to-end optimization of both features and basis functions, i.e., $Loss(\mathcal{F}_{\Theta}(\mathcal{X}), \mathcal{F}_{\Theta}(\mathcal{Y}), \mathcal{B}_{\mathcal{G}}(\Phi_\mathcal{X}), \mathcal{B}_{\mathcal{G}}(\Phi_\mathcal{Y}))$, as illustrated in~\cref{fig: our_network}. Our streamlined pipeline not only outperforms feature-centric approaches in accuracy but also achieves superior computational efficiency.
\section{Advanced Functional Maps}\label{sec:AdvancedFM}
In this section, we systematically formalize the advanced functional maps framework. Our approach encompasses three core components: Learnable basis representation (\cref{sec:learnable_basis_representation}), which introduces learnable spectral functions; Advanced functional map formulation (\cref{sec:AdvancedFM_method}), which defines the spectral mapping over these adaptive bases; Pointwise map recovery (\cref{sec:map_recovery_from_orth}), detailing the computation of dense correspondences from the orthonormality promoting term.

\subsection{Learnable Spectral Basis Representation}\label{sec:learnable_basis_representation}
\begin{definition}
Given the first $k$ (Laplacian) basis functions $ \Phi_{k}  = [\phi_{1}, \phi_{2} ,\cdots, \phi_{k}]$ defined on a shape, and a set of inhibition functions ${G} = diag\{g_1, g_2, \dots, g_k\} $, where $g_i: \mathbb{R}\to (0,1]$, $ 1 \leq  i \leq k$, then, we define the learnable spectral basis as 
\begin{equation}
    \Psi_k := \Phi_k G,
\end{equation}
where $\Psi= \left[\psi_{1}, \psi_{2}, \dots, \psi_{k} \right] $,  ${\psi}_{i} = {g_i} {\phi}_{i}$.
\end{definition}

We develop a novel learnable basis function representation by introducing inhibition functions $G$ that \textit{linearly} act upon the Laplacian basis $\Phi_k$. These inhibition functions serve to selectively suppress the spectral basis to varying degrees, which can be intuitively regarded as an attention or filtering mechanism. Moreover, this formulation exhibits a number of advantageous properties.
\begin{itemize}
    \item \textit{Invertibility}. 
    The Moore–Penrose pseudoinverse $\Psi_k^{\dagger} = G^{-1}\Phi_k^{\top} {M} $, since $  \Psi_k^{\dagger} \Psi_k = G^{-1}\Phi_k^{\top}{M}\Phi_k G = I $.
    \item \textit{Orthogonality preservation}. The Laplacian basis $\Phi_k$ is orthogonal, then $\Psi_k$ is orthogonal. Since  $<\psi_{i}, \psi_{j}>_M = <g_i\phi_{i}, g_j\phi_{j}>_M = 0$, if $i\neq j$.
    \item \textit{Learnability}. The inhibition functions ${G}$ can be regarded as learnable parameters, which can be learned in a data-driven manner.
    \item \textit{Structure information preservation}. The incorporation of the inhibition function ${G}$ does not alter the prior structural information encoded by the original basis $\Phi_k$. For instance, just as the low-frequency information of the shape is encoded in $\phi_{1}$, it is also preserved in $\psi_{1}$.
\end{itemize}

\subsection{Advanced Functional Map}\label{sec:AdvancedFM_method}
\begin{theorem}
\textbf{Advanced functional maps.} Consider the first $k$ adaptable basis functions $\Psi_{\mathcal{X},k} = \Phi_{\mathcal{X},k} G_{\mathcal{X}} \in \mathbb{R}^{|V_\mathcal{X}|\times k}$ and $\Psi_{\mathcal{Y},k} = \Phi_{\mathcal{Y},k} G_{\mathcal{Y}}\in \mathbb{R}^{|V_\mathcal{Y}|\times k}$, defined on shapes $\mathcal{X}$
 and $\mathcal{Y}$, respectively. The optimization problem in~\cref{equ: desc and reg} is reformulated as: 
 \begin{small}
\begin{equation}\label{equ:desc and reg adj}
   {C}^{\mathrm{A}}_{\mathcal{XY}} = \mathop{\arg\min}\limits_{ {C}^{\mathrm{A}}_{\mathcal{XY}}\!} \left\|  {C}^{\mathrm{A}}_{\mathcal{XY}} \Psi_{\mathcal{Y},k}^{\dagger}   F_{\mathcal{Y}} - \Psi_{\mathcal{X},k}^{\dagger}   F_{\mathcal{X}} \right\|^{2}_{\mathrm{F}} 
    + \lambda {E}_{reg}({C}^{\mathrm{A}}_{\mathcal{XY}}), 
\end{equation} 
\end{small}
where $E_{reg} = \left\| C^{\mathrm{A}}_{\mathcal{XY}}\Lambda_{\mathcal{X}} - \Lambda_{\mathcal{Y}} C^{\mathrm{A}}_{\mathcal{XY}} \right\|^{2}_{\mathrm{F}}$.
\begin{proof}
    The first component ensures that ${C}^{\mathrm{A}}_{\mathcal{XY}}$ acts as an optimal linear mapping between the Fourier coefficients of descriptors $F_{\mathcal{X}}$ and $F_{\mathcal{Y}}$ within the learned basis space. The second component, $E_{reg}$, penalizes the non-commutativity between the functional map and the Laplacian operator as defined on the adaptable basis functions. A detailed proof can be found in the supplementary material (\cref{sec:desc_pre_lam_commu_in_supp}).
\end{proof}
\end{theorem}

Specifically, we refer to ${C}^{\mathrm{A}}_{\mathcal{XY}}$ as an \textit{advanced functional map}, as it is defined over learnable basis functions, contrasting with the fixed, handcrafted bases employed in conventional functional map frameworks~\cite{Ovsjanikov2012}.

Alternatively, we compute the rank-$k$ truncated approximation of the pointwise map $\Pi_\mathcal{YX}$ using the adaptable basis functions and derive the corresponding advanced functional map, namely, 
\begin{equation}\label{equ: compute C by Pi adj}
{C}^{\mathrm{A}}_\mathcal{XY}=\Psi_{\mathcal{Y},k}^\dagger\Pi_\mathcal{YX}\Psi_{\mathcal{X},k}.
\end{equation}
The two computational methods are equivalent under some conditions; please refer to the supplementary materials for further details (\cref{sec:equ_sam_in_supp}).
 
\subsection{Map Recovery}\label{sec:map_recovery_from_orth}
Recovering pointwise maps from functional maps is a canonical step in the standard functional maps pipeline. 
Inspired by ZoomOut~\cite{Melzi2019}, we first enforce that $C^{\mathrm{A}}_\mathcal{XY}$ preserves orthogonality by minimize the following energy, i.e., 
\begin{equation}\label{equ:orth preservation adj}
 \left\|  C^{\mathrm{A}}_\mathcal{XY} (C^{\mathrm{A}}_\mathcal{XY})^{\top} -{I}\right\|^{2}_\mathrm{F}.
\end{equation}


\begin{theorem}
Substituting ${C}^{\mathrm{A}}_\mathcal{XY}=\Psi_{\mathcal{Y},k}^\dagger\Pi_\mathcal{YX}\Psi_{\mathcal{X},k}$  into~\cref{equ:orth preservation adj} derives an alternative orthogonality constraint that simultaneously penalizes components of $\Pi_\mathcal{YX}$ outside the subspace spanned by $\Psi_\mathcal{Y}$, namely:

\begin{equation}\label{equ:area preservation variation adj}
E({\Pi}_{\mathcal{YX}},C^{\mathrm{A}}_\mathcal{XY},G) = \left\|\Psi_{\mathcal{Y},k} - {\Pi}_{\mathcal{YX}}\Psi_{\mathcal{X},k} (C^{\mathrm{A}}_\mathcal{XY})^{\top}\right\|^{2}_{\mathrm{F}}.
\end{equation}
\end{theorem}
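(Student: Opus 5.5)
The plan is to follow the ZoomOut-style derivation, splitting the residual in \cref{equ:area preservation variation adj} into a component inside the span of $\Psi_{\mathcal{Y},k}$ and a component orthogonal to it. Throughout, I take the norm in \cref{equ:area preservation variation adj} to be the mass-weighted Frobenius norm $\|A\|_M^2=\mathrm{tr}(A^\top M_{\mathcal{Y}} A)$, which is the natural inner product on the shape $\mathcal{Y}$. If the plain Frobenius norm is intended, the argument goes through only under the usual lumped-mass simplification, and I would state that caveat explicitly.

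\textbf{Projector.} Using the invertibility property from \cref{sec:learnable_basis_representation}, $\Psi_{\mathcal{Y},k}^\dagger=G_{\mathcal{Y}}^{-1}\Phi_{\mathcal{Y},k}^\top M_{\mathcal{Y}}$. Hence
\[
P:=\Psi_{\mathcal{Y},k}\Psi_{\mathcal{Y},k}^\dagger=\Phi_{\mathcal{Y},k}\Phi_{\mathcal{Y},k}^\top M_{\mathcal{Y}} .
\]
This is the $M$-orthogonal projector onto $\mathrm{span}(\Phi_{\mathcal{Y},k})=\mathrm{span}(\Psi_{\mathcal{Y},k})$, and it does not depend on $G_{\mathcal{Y}}$.

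\textbf{Residual decomposition.} Set $C^{\mathrm{A}}_{\mathcal{XY}}=\Psi_{\mathcal{Y},k}^\dagger\Pi_{\mathcal{YX}}\Psi_{\mathcal{X},k}$. Then
\[
P\,\Pi_{\mathcal{YX}}\Psi_{\mathcal{X},k}(C^{\mathrm{A}}_{\mathcal{XY}})^\top=\Psi_{\mathcal{Y},k}C^{\mathrm{A}}_{\mathcal{XY}}(C^{\mathrm{A}}_{\mathcal{XY}})^\top .
\]
Since $P\Psi_{\mathcal{Y},k}=\Psi_{\mathcal{Y},k}$, the residual $R=\Psi_{\mathcal{Y},k}-\Pi_{\mathcal{YX}}\Psi_{\mathcal{X},k}(C^{\mathrm{A}}_{\mathcal{XY}})^\top$ splits as
\[
R=\underbrace{\Psi_{\mathcal{Y},k}\bigl(I-C^{\mathrm{A}}_{\mathcal{XY}}(C^{\mathrm{A}}_{\mathcal{XY}})^\top\bigr)}_{PR}
-\underbrace{(I-P)\Pi_{\mathcal{YX}}\Psi_{\mathcal{X},k}(C^{\mathrm{A}}_{\mathcal{XY}})^\top}_{(I-P)R}.
\]
The two pieces are $M$-orthogonal, so Pythagoras gives
\[
E=\bigl\|G_{\mathcal{Y}}\bigl(I-C^{\mathrm{A}}_{\mathcal{XY}}(C^{\mathrm{A}}_{\mathcal{XY}})^\top\bigr)\bigr\|_{\mathrm F}^2+\bigl\|(I-P)\Pi_{\mathcal{YX}}\Psi_{\mathcal{X},k}(C^{\mathrm{A}}_{\mathcal{XY}})^\top\bigr\|_M^2 .
\]
For the first term I use $\Phi_{\mathcal{Y},k}^\top M_{\mathcal{Y}}\Phi_{\mathcal{Y},k}=I$. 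The second term is exactly the penalty on the components of $\Pi_{\mathcal{YX}}$ (acting on the learned basis) that lie outside $\mathrm{span}(\Psi_{\mathcal{Y}})$, which is the claimed extra effect.

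\textbf{Relating the first term to \cref{equ:orth preservation adj}.} This is the main obstacle, because the first term is a $G_{\mathcal{Y}}$-weighted version of the orthogonality energy rather than the energy itself.

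1. When $G_{\mathcal{Y}}=I$ the two coincide, recovering the ZoomOut identity.
2. In general, the entries $g_i\in(0,1]$ give
\[
g_{\min}^2\,\bigl\|I-C^{\mathrm{A}}(C^{\mathrm{A}})^\top\bigr\|_{\mathrm F}^2\le\text{first term}\le\bigl\|I-C^{\mathrm{A}}(C^{\mathrm{A}})^\top\bigr\|_{\mathrm F}^2 .
\]
So the two terms share the same minimizers, namely orthogonal $C^{\mathrm{A}}$, and are equivalent penalties.

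I would therefore present $E$ as a row-reweighted orthogonality term plus an out-of-subspace term. This justifies calling \cref{equ:area preservation variation adj} an alternative orthogonality constraint rather than an identity with \cref{equ:orth preservation adj}.
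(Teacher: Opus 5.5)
Your proposal is correct and follows essentially the paper's route. The paper likewise splits $\|X\|^2_{M_{\mathcal{Y}}}$, where $X$ is your residual, into $\|G_{\mathcal{Y}}\Psi^{\dagger}_{\mathcal{Y},k}X\|^2_{\mathrm{F}}$, which equals your $\|G_{\mathcal{Y}}(I-C^{\mathrm{A}}_{\mathcal{XY}}(C^{\mathrm{A}}_{\mathcal{XY}})^{\top})\|^2_{\mathrm{F}}$, plus the out-of-span regularizer $\|(I-\Psi_{\mathcal{Y},k}\Psi^{\dagger}_{\mathcal{Y},k})X\|^2_{M_{\mathcal{Y}}}$. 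It does this with trace identities instead of your projector-and-Pythagoras argument, and then drops $G_{\mathcal{Y}}$ by noting that the weighted and unweighted orthogonality energies vanish exactly when $C^{\mathrm{A}}_{\mathcal{XY}}(C^{\mathrm{A}}_{\mathcal{XY}})^{\top}=I$.

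Your explicit two-sided $g_{\min}$ bounds make this step somewhat more precise than the paper's. The same holds for your caveat about passing from $\|\cdot\|_{M_{\mathcal{Y}}}$ to the Frobenius norm, which the paper simply asserts at the level of $\arg\min$ over $\Pi_{\mathcal{YX}}$. That assertion is valid for fixed $C^{\mathrm{A}}_{\mathcal{XY}}$ and a diagonal mass matrix, because the energy then decouples row-wise.
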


\begin{proof}
 For a detailed proof, please refer to the supplementary material (\cref{sec:nn_FMsolver}).
\end{proof}

We assert that~\cref{equ:area preservation variation fixed} and~\cref{equ:area preservation variation adj} differ in their implications: ${C}^{\top}_\mathcal{XY}$ in~\cref{equ:area preservation variation fixed} denotes the \textit{adjoint operator}~\cite{Gautam2021} of ${C}_\mathcal{XY}$, therefore~\cref{equ:area preservation variation fixed} enforces a regularizer that strictly preserves local area for pointwise maps~\cite{ROABC2013}. Whereas~\cref{equ:area preservation variation adj} is designed to promote orthogonal structural properties of ${C}^{\mathrm{A}}_\mathcal{XY}$ consistent with ${C}_\mathcal{XY}$. Additionally, given $G$ and $C^{\mathrm{A}}_\mathcal{XY}$, the pointwise map in~\cref{equ:area preservation variation adj} is solved via nearest neighbor search algorithm, i.e., 
\begin{equation}\label{equ:nnsearch Pi adj}
    \Pi_{\mathcal{YX}} = \text{NS}(\Psi_{\mathcal{Y},k},\Psi_{\mathcal{X},k}({C^{\mathrm{A}}_\mathcal{XY}})^\top).
\end{equation}

Finally, we propose a novel spectral upsampling alternating refinement algorithm, dubbed \textit{G-ZoomOut}, please see~\cref{alg: G-ZoomOut refinement} in the supplementary material.

\begin{figure*}[h!t]
	\centering
	\includegraphics[width=0.9\linewidth]{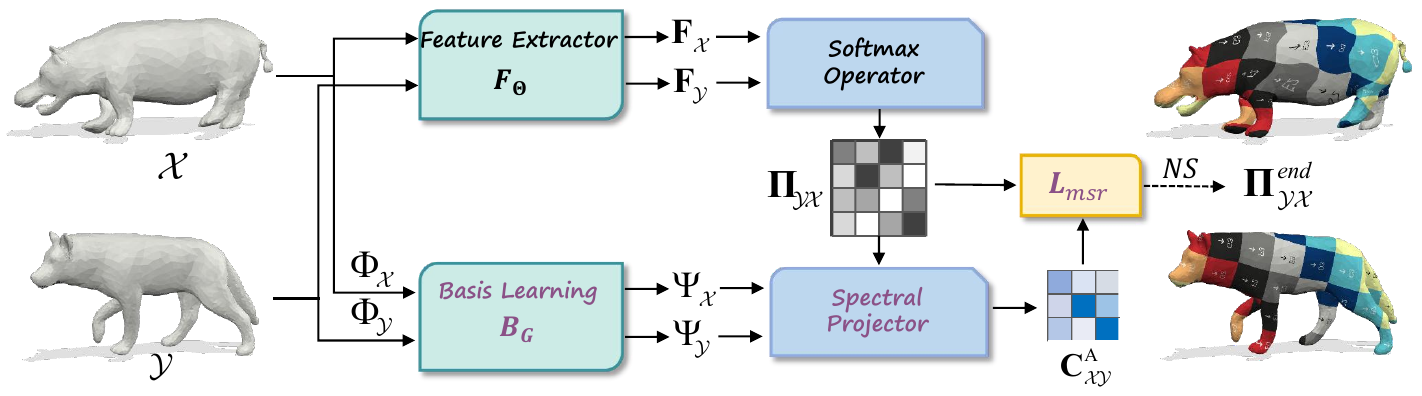}
	\caption{An overview of our method. (1) Feature Extraction: Learned features ${F}_{\mathcal{X}}$ and ${F}_{\mathcal{Y}}$ are extracted from shapes $\mathcal{X}$ and $\mathcal{Y}$, respectively. 
    (2) Basis Learning: The optimized basis functions,  $\Psi_{\mathcal{X}}$ and $\Psi_{\mathcal{Y}}$, are generated by the proposed heat diffusion network.
    (3) Map Estimation: The differentiable pointwise map ${\Pi}_\mathcal{YX}$ is computed using the softmax operator~\cref{eq: compute soft map}, and the advanced functional map ${{C}}^{\mathrm{A}}_\mathcal{XY}$ is calculated via spectral basis projection~\cref{equ: compute C by Pi adj}. 
    (4) An unsupervised loss~\cref{equ:multi_spectral_loss} is constructed to supervise basis functions, pointwise maps, and functional maps.
    (5) The pointwise map ${\Pi}^{end}_\mathcal{YX}$ is recovered during inference using~\cref{equ: nnsearch1 in test} and Eq.\eqref{equ: nnsearch2 in test}.}\label{fig: our_network}
\end{figure*}

\section{Unsupervised Spectral Basis Learning for Efficient and Robust Shape Matching}\label{sec:BasisLearning}

Leveraging the advanced functional map framework (\cref{sec:AdvancedFM}), this section introduces the first unsupervised spectral basis learning approach for non-rigid 3D shape matching. As illustrated in~\cref{fig: our_network}, the proposed architecture comprises several integrated modules: a feature extraction backbone (\cref{sec:feat_extra}), a learnable spectral basis layer (\cref{sec:basis_learning_layer}), and an advanced functional map estimation layer (\cref{sec:axiomatic_functional_map_layer}). The entire pipeline is optimized via a multi-resolution unsupervised loss (\cref{sec:MSLoss}), followed by a specialized inference strategy (\cref{sec:nn_inference}) for computing pointwise correspondences during testing.

\subsection{Feature Extraction}\label{sec:feat_extra}
Extracting local shape geometrical information constitutes a fundamental step in shape matching. Following common practice~\cite{Cao2023,magnet2024memory,bastian2024hybrid,luo2025deep,zhuravlev2025denoising}, we employ the widely recognized SOTA local feature extractor, DiffusionNet~\cite{sharp2022diffusionnet}, to generate the features ${F}_\mathcal{X}$ and ${F}_\mathcal{Y}$ for shape $\mathcal{X}$ and $\mathcal{Y}$, respectively.

\subsection{Basis Learning}\label{sec:basis_learning_layer}
The core of our approach is learning a novel set of basis functions $\Psi$, which consists of two components: the original basis functions $\Phi$, encoding prior geometric information, and the inhibition functions $G$, which enhance the basis expressiveness through a data-driven mechanism. On the other hand, established \textit{spectral graph convolution theory}~\cite{chung1997spectral} and heat diffusion~\cite{chamberlain2021grand} can be seamlessly integrated with our framework, allowing the direct application of spectral graph/manifold convolutional neural networks (CNNs) to optimize the basis functions.


\textit{Spectral convolution.} Spectral convolution is a fundamental operation in the field of graph signal processing~\cite{shuman2013emergingGPS}. The spectral filtering of an input signal $x_{in}$ using a kernel $f$ is defined by the convolution theorem~\cite{arfken2011mathematical} as:
\begin{equation}\label{equ:Spectral Convolution continue}
x_{out}= (\Phi * f) = \Phi f(\Lambda)\Phi^{\dagger}x_{in}, 
\end{equation}
where the filter $f$ is typically parameterized by polynomials, i.e., $f(\lambda) = \sum_{j=0}^J \theta_j p_j(\lambda)$. In the context of undirected graphs, the spectral basis $\Phi$ is orthonormal, implying $\Phi^{\dagger} = \Phi^{\top}$. We employ the pseudoinverse notation $\dagger$ here to maintain mathematical generality across various domains.

\textit{Heat diffusion}. The classical heat diffusion process on a manifold can be formulated in the spectral domain~\cite{sharp2022diffusionnet,behmanesh2023tide} as:
\begin{equation}\label{equ:heat0}
h_t(x_{in}) = \Phi e^{-t\Lambda}\Phi^{\dagger}x_{in}.
\end{equation}

We emphasize a fundamental mathematical unification between heat diffusion and spectral graph convolutions. Although they stem from different signal processing origins, both operations share an identical mathematical formalism~\cite{shuman2016vertex,chamberlain2021grand,behmanesh2023tide} rooted in spectral filtering. Specifically, heat diffusion is characterized by the exponential filter $f(\lambda) = e^{-t\lambda}$, whereas spectral convolutions typically employ polynomial filters. Consequently, heat diffusion can be viewed as a specialized, constrained instance of spectral convolution. Further theoretical discussions and technical details are provided in the Supplementary Material (\cref{sec:gnns_heat}).


\begin{theorem}
By treating the basis functions $\Phi$ as signal embeddings, the spectral convolution operator can be reformulated as:
\begin{equation}\label{equ:Spectral Convolution discrete}
(\Phi * f) = \Phi f(\Lambda)\Phi^{\dagger}\Phi = \Phi f(\Lambda).
\end{equation}
Furthermore, by defining the inhibition function as $G = f(\Lambda)$, we have:
\begin{equation}
\Psi = \Phi f(\Lambda) = (\Phi * f).
\end{equation}
\end{theorem}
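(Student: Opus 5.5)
The argument is a direct computation once the spectral convolution \eqref{equ:Spectral Convolution continue} is applied column by column to the matrix $\Phi_k$, that is, with $x_{in}=\phi_i$ for each $i$. Substituting $x_{in}=\Phi_k$ gives $(\Phi * f)=\Phi_k f(\Lambda_k)\Phi_k^{\dagger}\Phi_k$. The key identity is $\Phi_k^{\dagger}\Phi_k=I_k$. It follows from the choice $\Phi_k^{\dagger}=\Phi_k^{\top}M$ in \cref{sec:bg} and the $M$-orthonormality of the generalized Laplacian eigenvectors, $\Phi_k^{\top}M\Phi_k=I_k$. This is the same fact used in the \emph{Invertibility} property after the Definition, there with $G=I$. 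It yields the first claimed equality, $\Phi_k f(\Lambda_k)\Phi_k^{\dagger}\Phi_k=\Phi_k f(\Lambda_k)$.

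For the second part, I will note that $\Lambda_k=\mathrm{diag}(\lambda_1,\dots,\lambda_k)$. A filter $f$ applied to a diagonal matrix, whether through the polynomial parameterization $f(\lambda)=\sum_j\theta_j p_j(\lambda)$ or an exponential such as $e^{-t\lambda}$, acts entrywise on the diagonal: $f(\Lambda_k)=\mathrm{diag}(f(\lambda_1),\dots,f(\lambda_k))$. For polynomials this follows from $\Lambda_k^m=\mathrm{diag}(\lambda_i^m)$ and linearity. For the exponential it follows from the power series, or simply from taking $f(\Lambda)$ to be defined spectrally. Setting $G=f(\Lambda_k)$, i.e.\ $g_i=f(\lambda_i)$, gives a diagonal matrix of the form required in the Definition. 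Hence $\Psi_k=\Phi_k G=\Phi_k f(\Lambda_k)=(\Phi * f)$, and columnwise $\psi_i=f(\lambda_i)\phi_i$.

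The only real obstacle is the hypothesis needed for $G$ to be a valid set of inhibition functions, namely $g_i\in(0,1]$. I will state it as a requirement on the filter: $f$ takes values in $(0,1]$ on the spectrum $\{\lambda_1,\dots,\lambda_k\}$. The heat kernel $f(\lambda)=e^{-t\lambda}$ with $t\ge 0$ satisfies this automatically, because the Laplacian eigenvalues are nonnegative. For a general polynomial filter it must be imposed, for example by a suitable range constraint or normalization of the $\theta_j$. Without this condition the identity $\Psi=\Phi*f$ still holds algebraically, but $G$ may not be invertible, so the pseudoinverse formula $\Psi_k^{\dagger}=G^{-1}\Phi_k^{\top}M$ from the Definition's properties would require $f(\lambda_i)\neq 0$. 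I will record that as a remark.
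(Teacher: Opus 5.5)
Your proposal is correct and is essentially the paper's own argument, which the paper carries out inline in the statement itself: substitute $x_{in}=\Phi$ into the spectral convolution, cancel $\Phi^{\dagger}\Phi=I$, and set $G=f(\Lambda)$. Your additions are sound refinements that the paper leaves implicit: you justify $\Phi_k^{\dagger}\Phi_k=I_k$ via $M$-orthonormality, you check that $f(\Lambda_k)=\mathrm{diag}(f(\lambda_1),\dots,f(\lambda_k))$, and you flag that $f(\lambda_i)\in(0,1]$ is needed for $G$ to be a valid inhibition matrix (automatic for $e^{-t\lambda}$ with $t\ge 0$, not for general polynomial filters).
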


This result demonstrates that the proposed basis function learning is formally equivalent to performing spectral convolution on the original basis functions, where the inhibition function $G$ serves as the spectral filter. Consequently, classic spectral graph/manifold CNNs (e.g., ChebyNet~\cite{defferrard2016ChebyNet}, MoNet~\cite{monti2017MoNet}, and ACSCNNs~\cite{Li2020}), as well as heat diffusion frameworks (e.g., GRAND~\cite{chamberlain2021grand}, DiffusionNet~\cite{sharp2022diffusionnet}, Synchronous Diffusion~\cite{cao2024synchronous}), can be interpreted as mechanisms that directly optimize these basis functions.

Building upon these insights, we propose a tailored basis learning network designed to provide more effective, task-specific optimization than general-purpose spectral convolutional architectures.

\paragraph{Multi-scale Eigenvalue-Agnostic Heat Diffusion.} 
The basis optimization based on classical heat diffusion~\cite{sharp2022diffusionnet,cao2024synchronous}, typically truncated to the first $k$ eigenfunctions, can be formalized as:
\begin{equation}\label{equ:heat1}
\Psi_{k} = h_t(\Phi_{k}) = \Phi_{k}e^{-t\Lambda_{k}}\Phi_{k}^{\dagger}\Phi_{k} = \Phi_{k}e^{-t\Lambda_{k}}.
\end{equation} 

However, standard heat-kernel-based bases suffer from two primary limitations. First, reliance on a single diffusion time scale $t$ restricts the multi-scale representational capacity, thereby limiting the expressiveness of the resulting inhibition functions. Second, since the eigenvalues $\Lambda_k$ serve as fixed weights for the diffusion time, high-frequency components are often excessively suppressed regardless of their task-relevance. Ideally, frequency attenuation should be adaptively determined by the specific requirements of the downstream task. 

To address these shortcomings, we introduce a multi-scale eigenvalue-agnostic heat diffusion mechanism to generate optimized basis functions:
\begin{equation}\label{equ:heatnew}
\Psi_k = \Phi_{k}e^{-T},
\end{equation}
where $T = \text{diag}\{ t_1, t_2, \dots, t_k\}$ is a diagonal matrix of learnable parameters. The coefficients $\{t_i\}_{i=1}^k$ are initialized to zero, ensuring that the inhibition function acts as an identity mapping at the onset of training, thus preserving all original basis functions equally. Furthermore, we enforce parameter sharing for the diffusion matrix $T$ across the source and target domains, i.e., ${\Psi}_{\mathcal{X},k} = {\Phi}_{\mathcal{X},k} G$ and ${\Psi}_{\mathcal{Y},k} = {\Phi}_{\mathcal{Y},k} G$ with $G = e^{-T}$. This strategy not only curtails the computational overhead but also promotes spectral consistency between shapes, aligning with the robust optimization principles utilized in DiffusionNet~\cite{sharp2022diffusionnet}.

\subsection{Maps computation}\label{sec:axiomatic_functional_map_layer}
We utilize the softmax operator to efficiently generate a soft correspondence matrix~\cite{Eisenberger2021}, namely, 
\begin{equation}\label{eq: compute soft map}
    {\Pi}_{\mathcal{YX}} = \mathrm{Softmax}({F}_{\mathcal{Y}} {F}^{\top}_{\mathcal{X}}/\alpha),
\end{equation}
where $\alpha$ is the scaling factor to determine the softness of the correspondence matrix. 

The estimation of the functional map is a pivotal stage in the learning pipeline. Many existing SOTA frameworks~\cite{donati2020deep,Cao2023,bastian2024hybrid,luo2025deep} rely on linear system solvers~\cref{equ:desc and reg adj}, either independently or in conjunction with spectral projections~\cref{equ: compute C by Pi adj}, to ensure the properness of the resulting map. However, such solvers incur heavy computational costs and frequently suffer from numerical instability~\cite{magnet2024memory}. In contrast, we exclusively employ spectral projection~\cref{equ: compute C by Pi adj} to derive the advanced functional map. This integrated design leverages the formal equivalence between the two computational paradigms proven in~~\cref{sec:equ_sam_in_supp}, thereby making auxiliary solvers superfluous. By avoiding the complexities of least-squares optimization, our method achieves significantly lower computational overhead (see~\cref{sec:runtime}) and maintains high numerical robustness.

\subsection{Multi-resolution Unsupervised Spectral Loss}\label{sec:MSLoss}
To enforce structural consistency across pointwise maps, functional maps, and basis functions, we propose a novel multi-resolution unsupervised spectral loss $L_{mrs}$. Unlike conventional approaches that treat these components separately~\cite{NEURIPS2020_11953163}, our formulation unifies them within a single objective across multiple eigenvector resolutions:
\begin{equation}\label{equ:multi_spectral_loss}
    {L}_{mrs} = \sum^{k_{end}}_{k=k_{init}} \left\| {\Psi}_{\mathcal{Y},k} - {\Pi}_\mathcal{YX} {\Psi}_{\mathcal{X},k}  (C^{\mathrm{A}}_\mathcal{XY})^{\top} \right\|^{2}_\mathrm{F}.
\end{equation}

Remarkably, while contemporary SOTA methods~\cite{Cao2023,magnet2024memory,bastian2024hybrid,luo2025deep} necessitate complex combinations of multiple loss terms (e.g., bijectivity, orthogonality), our framework achieves superior performance using this single loss function alone, demonstrating its exceptional efficiency and robustness.

\subsection{Inference Stage}\label{sec:nn_inference} 
During inference, we first compute an initial pointwise map $\Pi_{\mathcal{YX}}$ via nearest neighbor search in the learned feature space:
\begin{equation}\label{equ: nnsearch1 in test}
    {\Pi}_{\mathcal{YX}} = \text{NS}({F}_\mathcal{Y}, {F}_\mathcal{X}).
\end{equation}
The final map $\Pi^{end}_{\mathcal{YX}}$ is then refined through functional map recovery:
\begin{equation}\label{equ: nnsearch2 in test}
    \Pi^{end}_{\mathcal{YX}} = \text{NS}(\Psi_{\mathcal{Y}, k_{end}}, \Psi_{\mathcal{X}, k_{end}}(C^{\mathrm{A}}_{\mathcal{XY}})^\top),
\end{equation}
where $C^{\mathrm{A}}_{\mathcal{XY}} = \Psi_{\mathcal{Y}, k_{end}}^\dagger \Pi_{\mathcal{YX}} \Psi_{\mathcal{X}, k_{end}}$. 

Unlike SOTA methods (e.g., ULRSSM~\cite{Cao2023}, HybridFMaps~\cite{bastian2024hybrid}) that switch between feature-based matching for non-isometric cases and functional map recovery for near-isometric ones, our approach maintains a unified pipeline (\cref{equ: nnsearch1 in test,equ: nnsearch2 in test}). This consistency stems from our learnable basis functions, which adaptively suppress spectral noise and mitigate non-isometric distortions via learned inhibition functions. Consequently, our optimized bases enable robust and direct pointwise map recovery across diverse matching scenarios.
\section{Experiments and Results}
\subsection{Datasets}\label{sec: dataset} 
We extensively evaluate our method on widely remeshed datasets, characterizing each dataset in terms of shape composition, deformation types, and training/testing settings, see~\cref{tab: dataset_desc}. 

\begin{table}[h!t]
\centering
\caption{Dataset Description. F and S denote the FAUST and SCAPE datasets, respectively. F\_a and S\_a refer to the anisotropic remeshed versions of FAUST and SCAPE. The terms near-iso and non-iso refer to near-isometric and non-isometric deformations, respectively. Aniso indicates different mesh connectivity, and topo-noise refers to topological noise.}
\scalebox{.8}{
\begin{tabular}{lrrr}
\toprule
\multicolumn{1}{l}{Dataset} & \multicolumn{1}{c}{Shape} & \multicolumn{1}{c}{Deformation} & \multicolumn{1}{c}{Training/Testing} \\ \midrule
F~\cite{ren2018continuous}                & human  & near-iso     & 80/20           \\
S~\cite{ren2018continuous}            & human      &  near-iso      & 51/20      \\
S19~\cite{melzi2019shrec}           & human    & near-iso      & -/44        \\
F\_a~\cite{donati2022deep}           & human   & aniso     & -/20        \\
S\_a~\cite{donati2022deep}              & human     & aniso     & -/20      \\
SMAL~\cite{zuffi2017}            & animals       &  non-iso    & 32/17     \\
DT4D-H~\cite{2022SmoothNonRigidShapeMatchingviaEffectiveDirichletEnergyOptimization}          & humanoid      & non-iso     & 198/95       \\
TOPKIDS~\cite{lahner2016shrec}         & human       & topo-noise  & 26/26       \\
\bottomrule
\end{tabular}}
\label{tab: dataset_desc}
\end{table}

\begin{table*}[h!t]
\centering
\caption{ Evaluating the matching results across various benchmarks, including near-isometric shape matching, cross-dataset generalization (FAUST, SCAPE, and SHREC'19), anisotropic meshing (F\_a and S\_a), and non-isometric shape matching (SMAL and DT4D-H), respectively.  The numbers in the table are mean geodesic errors~\cite{Kim2011BIM}, with all results multiplied by 100 for improved readability ($\times 100$). \textbf{Bold}: Best. \underline{Underline}: Runner-up. Due to the lack of aligned datasets in the SOTA method SmS~\cite{cao2024spectral} (only aligned FAUST is available in the open-source code), we are unable to conduct a comprehensive comparison with it and have therefore excluded it from the baselines.} 
\scalebox{0.8}{
\begin{tabular}{@{}c|lrrrrrrrrrrrr}
\toprule
    &Train             & \multicolumn{3}{c}{FAUST} & \multicolumn{3}{c}{SCAPE} & \multicolumn{2}{c}{FAUST} & \multicolumn{2}{c}{SCAPE} & \multicolumn{1}{c}{\multirow{2}{*}{SMAL}} & \multicolumn{1}{c}{{DT4D-H}}
\\ \cmidrule(lr){3-5} \cmidrule(lr){6-8} \cmidrule(lr){9-10} \cmidrule(lr){11-12} 
&Test            & F        & S   & S19    & F        & S   & S19    & F\_a      & S\_a  & F\_a    & S\_a  & \multicolumn{1}{c}{}  & \multicolumn{1}{c}{inter-class}                    
\\ \midrule
\parbox[t]{2mm}{\multirow{4}{*}{\rotatebox[origin=c]{90}{\textit{Axiomatic}}}}        
& ZoomOut~\cite{Melzi2019}             & 6.1         & 7.5   & -     & 6.1         & 7.5  & -  & 8.7         & 15.0        & 8.7         & 15.0  & 47.7          & 29.0      \\
&SmoothShells~\cite{eisenberger2020smooth}        & 2.5         & 4.7   & -     & 2.5         & 4.7  & -    & 5.4         & 5.0         & 5.4         & 5.0    & 34.9            & 6.3   \\
&DiscreteOp~\cite{ren2021discrete}          & 5.6         & 13.1   & -     & 5.6        & 13.1 & -  & 6.2         & 14.6       & 6.2       & 14.6   & 36.1       & 27.6         \\
&MWP~\cite{Hu2021}                 & 3.1         & 4.1    & -     & 3.1          & 4.1  & -     & 8.2         & 8.7       & 8.2         & 8.7    & 20.9       & 25.4        \\ \midrule

\parbox[t]{2mm}{\multirow{2}{*}{\rotatebox[origin=c]{90}{\textit{Sup.}}}}                  
&FMNet~\cite{litany2017deep}               & 11.1         & 30.0    & -     & 33.0          & 17.0  & -   & 42.0         & 43.0     & 43.0          & 41.0    & -      & 38.0    \\             
&GeomFmaps~\cite{donati2020deep}           & 2.6         & 3.4    & 9.9     & 3.0          & 3.0  & 12.2   & 3.2         & 3.8    & 8.4     & 3.1     & 4.3      & 4.1      \\  \midrule 
\parbox[t]{2mm}{\multirow{12}{*}{\rotatebox[origin=c]{90}{\textit{Unsupervised}}}}
                
&Deep Shells~\cite{eisenberger2020deep}         & 1.7         & 5.4  & 27.4     & 2.7         & 2.5    & 23.4   & 12.0        & 16.0       & 15.0         & 10.0    & 21.4           & 31.1  \\
&DUO-FMNet~\cite{donati2022deep}           & 2.5     & 4.2   & 6.4     & 2.7        & 2.6  & 8.4    & 3.0         & 4.4        & 3.1         & 2.7     & 4.8         & 15.8  \\
&AttentiveFMaps~\cite{li2022learning}      & 1.9         & 2.6  & 6.4      & 2.2        & 2.2     & 9.9  & 2.4         & \underline{2.8}        & \underline{2.5}         & 2.3   & 4.4                                       & 11.6        \\
&RFMNet~\cite{HU2023101189}               & 1.7         & \textbf{2.3}        & 6.3         & \underline{1.7}     & 2.1      & 6.9  & 3.6         & \textbf{2.6}        & 3.6         & 3.9       & 4.4                    & 13.9    \\
&ULRSSM~\cite{Cao2023}         & {1.6}         & 6.7   & 14.5     &  4.8       & \underline{1.9}   & 18.5   & 2.5         & 8.9        & 7.0          & {1.9}  & 4.5                                            & 5.2       \\
&ULRSSM(w.FT)~\cite{Cao2023}       & 1.6         & 2.2   & 5.7     & 1.6         & 1.9   & 6.7  & 1.9         & 2.4        & 2.1         & 1.9    & 4.2           & 4.1    \\
&DiffZO~\cite{magnet2024memory}                & 1.9    & \underline{2.4}   & \textbf{4.2}   & 1.9       & 2.4   & \underline{6.9}  & 2.2         & 3.8        & 2.7        & 2.4   & 4.3                        & 4.1    \\
&HybridFMaps~\cite{bastian2024hybrid} & \textbf{1.4}    & 4.2   & 9.5   & 2.3       & 1.8   & 13.0 & \underline{2.0}        &  4.6       & 3.4      & \textbf{1.8}   & \underline{3.5}           & \underline{3.9}      \\
&HybridFMaps(w.FT)~\cite{bastian2024hybrid} & 1.4   & 2.1   & 5.6   & 1.4       & 1.8   & 5.3  & 1.8         & 2.2        & 2.1       & 1.7   & 2.8                     & 3.5      \\
&DeepFAFM~\cite{luo2025deep}        & \underline{1.6}    & 2.7   & 7.0   & 1.9       & \textbf{1.9}   & 7.9  & 2.0         & 2.9        & 2.6        & \underline{1.9}   & 3.9          & 4.2      \\
&DenoiseFMaps~\cite{zhuravlev2025denoising}          & 1.8    & -   & -   & -      & 2.3   & - & \textbf{2.0}         & -       & -        & 2.3   & 4.3            & 12.8      \\
&\cellcolor{TRColor}{\textbf{Ours}}                & \cellcolor{TRColor}{1.8}    & \cellcolor{TRColor}{3.5}   & \cellcolor{TRColor}{\underline{6.2}}   & \cellcolor{TRColor}{\textbf{1.7}}       & \cellcolor{TRColor}{2.3}     & \cellcolor{TRColor}{\textbf{5.9}}  & \cellcolor{TRColor}{2.1}         & \cellcolor{TRColor}{3.6}       & \cellcolor{TRColor}{\textbf{2.1}}        & \cellcolor{TRColor}{2.3}  & \cellcolor{TRColor}{\textbf{2.6}}                                          & \cellcolor{TRColor}{\textbf{3.5}}     \\ 

\bottomrule
\end{tabular}}
\label{tab: near and non-iso}
\end{table*}

\subsection{Baselines}
We extensively compare our method with existing non-rigid deformable shape matching methods, which we categorize as follows:
\begin{itemize}
    \item \textit{Axiomatic approaches}, including ZoomOut \cite{melzi2019matching}, Smooth Shells \cite{eisenberger2020smooth}, DiscreteOp \cite{ren2021discrete}, and MWP \cite{Hu2021}.
    \item \textit{Supervised approaches}, FMNet~\cite{litany2017deep}, GeomFmaps~\cite{donati2020deep}.
    \item \textit{Unsupervised approaches}, 
    including UnsupFMNet~\cite{Halimi_2019}, SURFMNet~\cite{Roufosse_2019}, Deep Shells~\cite{eisenberger2020deep}, 
    NeuroMorph~\cite{eisenberger2021neuromorph}, 
    DUO-FMNet~\cite{donati2022deep},  AttentiveFMaps~\cite{li2022learning}, RFMNet~\cite{HU2023101189},
    ULRSSM~\cite{Cao2023}, DiffZO~\cite{magnet2024memory}, HybridFMaps~\cite{bastian2024hybrid}, DeepFAFM~\cite{luo2025deep}, and DenoiseFmaps~\cite{zhuravlev2025denoising}.     
\end{itemize}
Furthermore, ULRSSM~\cite{Cao2023} introduces a time-consuming fine-tuning technique, test-time adaptation, which adjusts network parameters for each test pair during inference. We extend ULRSSM~\cite{Cao2023} and HybridFMaps~\cite{bastian2024hybrid} by incorporating this technique, resulting in ULRSSM(w.FT) and HybridFMaps(w.FT). We then compare these extended versions, noting that we do not emphasize the fine-tuned results, as these methods require retraining on the test set, while other methods do not update parameters during inference. Using these fine-tuned methods as baselines highlights that our approach, which avoids fine-tuning, outperforms even the fine-tuned methods.

\subsection{Results}
We present extensive experimental results across multiple datasets, including challenging non-isometric ones. 

\textbf{Near-isometric matching}. The results of these benchmarks are provided in~\cref{tab: near and non-iso}, where our method is compared with current SOTA axiomatic, supervised, and unsupervised learning approaches. The results indicate that our method performs better than the previous SOTA axiomatic, supervised, and unsupervised methods, such as GeomFmaps and DiffZO on FAUST, and achieves comparable results on SCAPE. 

\textbf{Cross-dataset generalization.} To assess the generalization performance of our method, we train networks on remeshed datasets and test them on a different dataset, e.g., training on F and testing on S, and vice versa. We also use the challenging SHREC'19 dataset~\cite{melzi2019shrec} exclusively as a test set, providing a rigorous benchmark. As shown in~\cref{tab: near and non-iso}, the quantitative results demonstrate that our approach outperforms others in most scenarios. However, the SOTA unsupervised methods, ULRSSM~\cite{Cao2023} and HybridFMaps~\cite{bastian2024hybrid}, suffer significant performance drops on cross-dataset tasks (e.g., training on S and testing on S19), revealing their poor generalization ability. In contrast, our method consistently achieves superior performance, even surpassing the fine-tuned results of ULRSSM. These results clearly show that ULRSSM and HybridFMaps rely heavily on test-time adaptation, while our approach offers greater robustness.

\textbf{Matching with anisotropic meshing.} To evaluate robustness across different discretizations, we train networks on remeshed datasets and test them on anisotropic remeshed versions. The results presented in~\cref{tab: near and non-iso} show that our method achieves comparable performance in most settings and demonstrates greater resilience to changes in triangulation. For instance, when training on S and testing on F\_a, competitors like HybridFMaps experience significant performance declines, often overfitting to mesh connectivity and producing inaccurate predictions. In contrast, our method maintains strong robustness to varying mesh connectivity and consistently outperforms current methods.

\textbf{Non-isometric Shape Matching.} We evaluate our method on two challenging non-isometric datasets: SMAL~\cite{zuffi2017} and DT4D-H (inter-class)~\cite{2022SmoothNonRigidShapeMatchingviaEffectiveDirichletEnergyOptimization}, both featuring significant topological and geometric variations. We focus on the more demanding inter-class scenario of DT4D-H, as intra-class matching primarily involves near-isometric shapes where existing methods already saturate. As reported in~\cref{tab: near and non-iso}, our approach consistently outperforms all baselines across all non-isometric benchmarks. Notably, while HybridFMaps~\cite{bastian2024hybrid} enhances performance by incorporating extrinsic elastic bases~\cite{2023ElasticBasis}, our method, relying solely on optimized intrinsic Laplacian bases, surpasses even its fine-tuned results on SMAL and DT4D-H. These results demonstrate that our learnable inhibition functions effectively compensate for non-isometric distortions, providing a robust and superior alternative to existing intrinsic (e.g., ULRSSM~\cite{Cao2023}, DeepFAFM~\cite{luo2025deep}) and extrinsic-aided frameworks.

\section{Conclusion and Limitation}
We introduce a novel approach that unifies feature learning and basis optimization in an end-to-end manner, built upon the proposed advanced functional map framework. Our method outperforms existing feature learning-based functional map approaches in various challenging scenarios while maintaining computational efficiency. Theoretically, we demonstrate that basis learning is equivalent to spectral convolution, with inhibition functions acting as filters, paving the way for new directions in shape matching. 

Despite being broadly effective, we acknowledge its limitations under extreme conditions, such as severe non-isometric deformations and partial shape matching, which heavily distort the structural information encoded in the Laplacian basis. Employing deformation‐aware representations or incorporating extrinsic information offers viable avenues for addressing these challenges. 
\section{Acknowledgments}
This work is supported by National Natural Science Foundation of China under Grant 62271452. 

{\small
\bibliographystyle{ieeenat_fullname}
\bibliography{main}
}
\clearpage
\setcounter{page}{1}

\maketitlesupplementary
In this supplementary document, we first review the theoretical background of spectral graph convolutional networks and heat diffusion in~\cref{sec:gnns_heat}. In~\cref{sec:the_pro}, we provide formal proofs for our advanced functional map framework and the G-ZoomOut refinement algorithm. ~\cref{sec:results} presents comprehensive experiments to evaluate the robustness of our method, covering implementation details, topological noise matching, runtime comparisons, ablation studies, and parameter analysis. ~\cref{sec:Inhibition_analysis} offers an in-depth analysis of the learned inhibition functions across various datasets, followed by additional qualitative visualizations in~\cref{sec:visualization}. Finally, in~\cref{sec:limitation}, we outline promising avenues for future research.

\section{Background: Spectral Graph Convolution Networks and Heat Diffusion}\label{sec:gnns_heat}
In the main manuscript, our basis function learning framework draws upon the principles of spectral graph convolutions~\cite{defferrard2016ChebyNet} and heat diffusion~\cite{sun2009concise}. Due to space constraints, we provide a detailed overview of their theoretical foundations and a comparative analysis of their distinctions in this section. Readers seeking a more exhaustive treatment are referred to existing surveys on spectral geometry processing~\cite{wu2020comprehensive,shuman2016vertex}.

\paragraph{Spectral convolution}
Spectral Graph Convolutional Neural Networks (Spectral GCNs) represent a cornerstone in Graph Signal Processing (GSP)~\cite{ortega2018graph}. By generalizing the classical convolution operator from Euclidean domains to non-Euclidean graph structures~\cite{defferrard2016ChebyNet}, Spectral GCNs enable the effective extraction of local topological features. Methodologically, they follow a transform-filter-recover paradigm: a graph signal $x_{in}$ is first projected into the spectral domain via the Laplacian eigensystem; subsequently, a filter function $f$ is applied to modulate the spectral components; finally, the filtered signal is back-projected to the spatial domain, namely,

\begin{equation}\label{equ:Spectral_Convolution_continue_in_supp}
x_{out} = f*x_{in} = \mathcal{T}^{-1}(\mathcal{T}(f) \cdot \mathcal{T}({x_{in}})),
\end{equation}
where $\cdot$ is the matrix multiplication operator, $\mathcal{T}$ and $\mathcal{T}^{-1}$ denote the Fourier transform and corresponding inverse transform, respectively. 

In the discrete setting, the operation is formulated as:
\begin{equation}\label{equ:SpectralGraphConv_in_supp}
x_{out} = \Phi f(\Lambda)\Phi^{\dagger}x_{in}.
\end{equation}
Notably, while $\Phi^{\dagger} = \Phi^{\top}$ for orthonormal spectral bases in standard graph convolution, we employ the pseudoinverse notation $\dagger$ to maintain a consistent mathematical formalism throughout this work.

\paragraph{Heat diffusion}
The heat diffusion equation governs the temporal evolution and smoothing of a signal $u(s,t)$ over a geometric domain. Formally, 
\begin{equation}
    \frac{\partial u}{\partial t} =  \Delta u,
\end{equation}
where $s$ and $t$ denote the spatial position and diffusion time, respectively, while $\Delta$ signifies the Laplacian operator. 

As previously discussed, the solution to this continuous partial differential equation admits a closed-form expression via spectral decomposition. Given an initial signal distribution $u(s, 0) = x_{in}$, the solution at time $t$ can be formally represented as:
\begin{equation}\label{equ:heat_continue_in_supp}
u(s, t)= h_t(x_{in}) = e^{-t\Delta} x_{in},
\end{equation}
where $e^{-t\Delta}$ is the heat operator $h_t$. By expanding this solution in the spectral basis~\cite{sharp2022diffusionnet}, we recover the formulation presented in~\cref{equ:heat_continue_in_supp}:
\begin{equation}
x_{out} = h_t(x_{in})= \Phi e^{-t\Lambda} \Phi^\dagger x_{in}.
\end{equation}

In the spectral domain, the solution reveals that heat diffusion acts as an isotropic low-pass filter. The exponential term $e^{-t\lambda}$ induces a decay of high-frequency components, where the time parameter $t$ controls the diffusion scale: small $t$ preserves local geometric nuances, while large $t$ emphasizes global topological features.

\begin{table*}[h!t]
\centering
\caption{Comparison between Spectral Convolutional Neural Networks and Heat Diffusion.}
\label{tab:spectral_vs_heat}
\scalebox{0.85}{
\begin{tabular}{@{}lll@{}}
\toprule
{Property} & {Spectral CNNs} & {Heat Diffusion} \\ \midrule
{Filter Form $f(\lambda)$} & Polynomials ($\sum^{J}_{j=0} \theta_j p_j(\lambda)$) & Exponential ($e^{-t\lambda}$) \\
{Spectral Response} & Flexible (low-pass, high-pass, band-pass) & Strictly low-pass (exponential decay) \\
{Parameter Role} & Learnable weights $\theta$ for task-adaptation & Diffusion time $t$ for physical scaling \\
{Spatial Locality} & Controlled by polynomial order $J$ & Governed by diffusion time $t$ \\
{Stability} & Sensitive to basis drift and noise & Robust to isometric deformations \\
\bottomrule
\end{tabular}}
\end{table*}

The key differences between spectral CNNs and heat diffusion are summarized in~\cref{tab:spectral_vs_heat}. While Spectral CNNs offer flexibility, they often suffer from basis drift. Heat diffusion provides physical stability but lacks task-specific adaptability. Our proposed learnable inhibition function $G = e^{-T}$ bridges this gap by retaining the stable exponential form while allowing for eigenvalue-agnostic, multi-scale optimization.

\section{Theoretical Proofs}\label{sec:the_pro} 
We give the proof in the Supplementary Material for completeness, including 1) formulation of the optimization problem for Advanced Functional Maps; 2) equivalence between the two computational approaches under certain conditions; 3) a variant of the orthogonality-preserving energy; and 4) G-ZoomOut: a spectral upsampling iterative algorithm defined on the learnable spectral base.

\subsection{Optimization of advanced functional maps}\label{sec:desc_pre_lam_commu_in_supp}

\begin{unnumberedthm}
The Advanced Functional Maps optimization problem can be formulated as:
\begin{small}
\begin{equation*}\label{equ:desc and reg adj in app}
   {C}^{\mathrm{A}}_{\mathcal{XY}} = \mathop{\arg\min}\limits_{ {C}^{\mathrm{A}}_{\mathcal{XY}}\!} \left\|  {C}^{\mathrm{A}}_{\mathcal{XY}} \Psi_{\mathcal{X},k}^{\dagger}   F_{\mathcal{X}} - \Psi_{\mathcal{Y},k}^{\dagger}   F_{\mathcal{Y}} \right\|^{2}_{\mathrm{F}} 
    + \lambda {E}_{reg}({C}^{\mathrm{A}}_{\mathcal{XY}}), 
\end{equation*} 
\end{small}
where $E_{reg} = \left\| C^{\mathrm{A}}_{\mathcal{XY}}\Lambda_{\mathcal{X}} - \Lambda_{\mathcal{Y}} C^{\mathrm{A}}_{\mathcal{XY}} \right\|^{2}_{\mathrm{F}}$.
\end{unnumberedthm}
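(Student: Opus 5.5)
The plan is to derive the formulation by transporting the two terms of the standard problem~\cref{equ: desc and reg} from the Laplacian basis $\Phi_k$ to the learnable basis $\Psi_k=\Phi_k G$, using only the properties listed after the definition of the learnable spectral basis.

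\textbf{Functional representation and descriptor term.} I would first define the advanced functional map as the matrix representation, in the bases $\Psi_{\mathcal{X},k}$ and $\Psi_{\mathcal{Y},k}$, of the functional correspondence $T$ pushing functions on $\mathcal{X}$ to $\mathcal{Y}$. By invertibility, $\Psi_k^{\dagger}=G^{-1}\Phi_k^{\top}M$ satisfies $\Psi_k^{\dagger}\Psi_k=I$. Hence any $f\in\mathrm{span}(\Psi_{\mathcal{X},k})=\mathrm{span}(\Phi_{\mathcal{X},k})$ has coefficient vector $a=\Psi^{\dagger}_{\mathcal{X},k}f$. Note that the spans coincide because $G$ is diagonal with entries in $(0,1]$.

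Requiring $T f=g$ for each descriptor pair, with $f$ a column of $F_{\mathcal{X}}$ and $g$ a column of $F_{\mathcal{Y}}$, gives $C^{\mathrm{A}}_{\mathcal{XY}}\Psi^{\dagger}_{\mathcal{X},k}f=\Psi^{\dagger}_{\mathcal{Y},k}g$. The least-squares relaxation over all descriptor columns yields the Frobenius data term.

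I would also record the relation $C^{\mathrm{A}}_{\mathcal{XY}}=G_{\mathcal{Y}}^{-1}C_{\mathcal{XY}}G_{\mathcal{X}}$. This follows from substituting $\Psi^{\dagger}=G^{-1}\Phi^{\dagger}$: the exact descriptor equation in the $\Psi$ coordinates is equivalent to the one in the $\Phi$ coordinates under this change of variables. The relation is the bridge used in the next step.

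\textbf{Commutativity term.} Next I would handle the Laplacian commutativity term by computing the Laplacian in the new basis. Since $\Delta\Phi_k=\Phi_k\Lambda$ (with the mass-matrix convention), we get $\Delta\Psi_k=\Phi_k\Lambda G=\Phi_k G\Lambda=\Psi_k\Lambda$, because diagonal matrices commute. So the $\psi_i$ are still eigenfunctions with the same eigenvalues, and the operator $\Delta$ restricted to the span has matrix $\Psi^{\dagger}\Delta\Psi=\Lambda$ in the new basis.

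The condition $T\Delta_{\mathcal{X}}=\Delta_{\mathcal{Y}}T$ then reads $C^{\mathrm{A}}\Lambda_{\mathcal{X}}=\Lambda_{\mathcal{Y}}C^{\mathrm{A}}$. Penalizing its violation in Frobenius norm gives $E_{reg}$. Consistently, $G_{\mathcal{Y}}^{-1}(C\Lambda_{\mathcal{X}}-\Lambda_{\mathcal{Y}}C)G_{\mathcal{X}}=C^{\mathrm{A}}\Lambda_{\mathcal{X}}-\Lambda_{\mathcal{Y}}C^{\mathrm{A}}$, so exact commutativity is preserved under the change of basis. Summing the two penalties with weight $\lambda$ gives the stated problem.

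\textbf{Main obstacle.} The delicate point is the sense in which this is a \emph{reformulation}. Under the change of variables the objective is not literally equal to~\cref{equ: desc and reg}, because the Frobenius norms acquire the weights $G_{\mathcal{Y}}^{-1}$ and $G_{\mathcal{X}}$.

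I would therefore state the claim as follows. The two problems share the same exact (zero-residual) solutions up to the bijection $C\mapsto G_{\mathcal{Y}}^{-1}CG_{\mathcal{X}}$. The relaxed problem is the natural least-squares objective posed in the learned basis. That is, it is the reweighted version whose weights are exactly the inhibition functions. This matches the intended interpretation that $G$ acts as a spectral filter.

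If stronger equivalence is desired, I would note that it holds exactly when $G_{\mathcal{X}}=G_{\mathcal{Y}}=I$. In that case the statement recovers the standard problem, which is how the parameters are initialized via $T=0$.
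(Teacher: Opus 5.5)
Your proposal is correct and follows essentially the same route as the paper. The data term comes from transferring the $\Psi$-Fourier coefficients of the descriptors, and the commutativity term rests on the diagonal $G$ commuting with $\Lambda$. You package that fact as $\Psi_k^{\dagger}\Delta\Psi_k=\Lambda$, whereas the paper substitutes $L\to\Phi_k\Lambda\Phi_k^{\dagger}$ and $T_F\to\Psi_{\mathcal{Y},k}C^{\mathrm{A}}_{\mathcal{XY}}\Psi_{\mathcal{X},k}^{\dagger}$ into $T_F\circ L_{\mathcal{X}}=L_{\mathcal{Y}}\circ T_F$ and simplifies; both arrive at $C^{\mathrm{A}}_{\mathcal{XY}}\Lambda_{\mathcal{X}}=\Lambda_{\mathcal{Y}}C^{\mathrm{A}}_{\mathcal{XY}}$.

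Your extra observation is also correct, and the paper's proof does not address it: under $C^{\mathrm{A}}_{\mathcal{XY}}=G_{\mathcal{Y}}^{-1}C_{\mathcal{XY}}G_{\mathcal{X}}$, the objective is only a reweighting of the standard problem. You can sharpen your sufficient condition $G_{\mathcal{X}}=G_{\mathcal{Y}}=I$. Both terms decouple over rows of $C_{\mathcal{XY}}$, so the positive row weights coming from $G_{\mathcal{Y}}^{-1}$ factor out without changing the minimizers. Minimizers therefore already correspond whenever $G_{\mathcal{X}}=I$ or $\lambda=0$. The only genuine change is that the $j$-th column of the commutativity penalty is reweighted by the squared $j$-th diagonal entry of $G_{\mathcal{X}}$.
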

\begin{proof}
Clearly, the features $F_{\mathcal{X}}$ and $ F_{\mathcal{Y}}$ can be linearly represented by the learnable Laplacian basis functions $\Psi_{\mathcal{X},k}$ and $\Psi_{\mathcal{Y},k}$ respectively; the Advanced Functional Maps ${C}^{\mathrm{A}}_{\mathcal{XY}}$ then correspond to the transfer matrix between their Fourier coefficients, satisfying, 
\begin{equation*}
  {C}^{\mathrm{A}}_{\mathcal{XY}} \Psi_{\mathcal{Y},k}^{\dagger}   F_{\mathcal{Y}} = \Psi_{\mathcal{X},k}^{\dagger}   F_{\mathcal{X}} .
\end{equation*} 
On the other hand, according to Laplacian commutativity constraint term~\cite{Ovsjanikov2012}, we have
\begin{equation*}
    T_F\circ L_{\mathcal{X}} = L_{\mathcal{Y}} \circ T_F. 
\end{equation*}
In discrete triangle mesh settings, $L \to \Phi_{k} \Lambda \Phi_{k}^{\dagger}$ and $T_F \to \Psi_{\mathcal{Y},k} {C}^{\mathrm{A}}_{\mathcal{XY}} \Psi_{\mathcal{X},k}^{\dagger}$, we have
\begin{small}
\begin{equation*}
\begin{aligned}
\Psi_{\mathcal{Y},k} {C}^{\mathrm{A}}_{\mathcal{XY}} \Psi_{\mathcal{X},k}^{\dagger} \Phi_{\mathcal{X},k} \Lambda_{\mathcal{X}} \Phi_{\mathcal{X},k}^{\dagger} &= \Phi_{\mathcal{Y},k} \Lambda_{\mathcal{Y}} \Phi_{\mathcal{Y},k}^{\dagger} \Psi_{\mathcal{Y},k} {C}^{\mathrm{A}}_{\mathcal{XY}} \Psi_{\mathcal{X},k}^{\dagger}.
\end{aligned}
\end{equation*}
\end{small}
Since $ \Psi_{\mathcal{X},k}^{\dagger}= G^{-1}_{\mathcal{X}} \Phi_{\mathcal{X},k}^{\dagger}$, $\Psi_{\mathcal{Y},k} = \Phi_{\mathcal{Y},k} G_{\mathcal{Y}}$, and $\Phi_{\cdot, k}^{\dagger} \Phi_{\cdot, k} =I$, then  
\begin{equation*}
\begin{aligned}
\Psi_{\mathcal{Y},k} {C}^{\mathrm{A}}_{\mathcal{XY}} G^{-1}_{\mathcal{X}}\Lambda_{\mathcal{X}} \Phi_{\mathcal{X},k}^{\dagger} &= \Phi_{\mathcal{Y},k} \Lambda_{\mathcal{Y}} G_{\mathcal{Y}} {C}^{\mathrm{A}}_{\mathcal{XY}} \Psi_{\mathcal{X},k}^{\dagger}.
\end{aligned}
\end{equation*}
Moreover, $\Lambda_{\mathcal{X}}$, $\Lambda_{\mathcal{Y}}$,  $G^{-1}_{\mathcal{X}}$, and $G_{\mathcal{Y}}$ are diagonal matrices, their matrix multiplication is commutative, then
\begin{equation*}
\Psi_{\mathcal{Y},k} {C}^{\mathrm{A}}_{\mathcal{XY}} \Lambda_{\mathcal{X}} \Psi_{\mathcal{X},k}^{\dagger} = \Psi_{\mathcal{Y},k}\Lambda_{\mathcal{Y}} {C}^{\mathrm{A}}_{\mathcal{XY}} \Psi_{\mathcal{X},k}^{\dagger}.
\end{equation*}
Finally, multiplying both sides of the equation by $\Psi_\mathcal{Y}^\dagger$ on the left and $\Psi_\mathcal{X}$ on the right, we obtain:
\begin{equation*}
 {C}^{\mathrm{A}}_{\mathcal{XY}} \Lambda_{\mathcal{X}}  = \Lambda_{\mathcal{Y}} {C}^{\mathrm{A}}_{\mathcal{XY}}.
\end{equation*}
\end{proof}

\subsection{Equivalent computation}\label{sec:equ_sam_in_supp}
\begin{unnumberedthm}
The two computational methods are equivalent, i.e, 
\begin{equation*}\label{equ: compute C by Pi adj in app}{C}^{\mathrm{A}}_\mathcal{XY}=\Psi_{\mathcal{Y}, k}^\dagger\Pi_\mathcal{YX}\Psi_{\mathcal{X},k}
\end{equation*}
is equal to ${C}^{\mathrm{A}}_\mathcal{XY}$ in~\cref{equ:desc and reg adj in app}, if conditions (a)-(c)~\cite{cao2024revisiting} hold:
\begin{itemize}
    \item [] (a) $\Pi_{\mathcal{YX}}F_{\mathcal{X}}=F_\mathcal{Y}$.
    \item [] (b) $F_{\mathcal{X}}$ and $F_\mathcal{Y}$ are both in the span of $\Phi_{\mathcal{X},k}$ and $\Phi_{\mathcal{Y},k}$, respectively.
    \item [] (c) $\Phi_{\cdot,k}^{\dagger} F_{\cdot}\in \mathbb{R}^{k\times d}$ is full rank and $\lambda = 0$ in~\cref{equ:desc and reg adj in app}.
\end{itemize}
\end{unnumberedthm}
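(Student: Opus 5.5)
With $\lambda=0$, I will show that $\Psi_{\mathcal{Y},k}^\dagger\Pi_{\mathcal{YX}}\Psi_{\mathcal{X},k}$ achieves zero residual in the data term of the supplementary version of~\cref{equ:desc and reg adj} (the form $\|C^{\mathrm{A}}_{\mathcal{XY}}\Psi_{\mathcal{X},k}^\dagger F_{\mathcal{X}}-\Psi_{\mathcal{Y},k}^\dagger F_{\mathcal{Y}}\|_{\mathrm F}^2$). I will then show that the minimizer is unique, so the two matrices must coincide.

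\textbf{Step 1 (reconstruction).} Condition (b) gives $F_{\mathcal{X}}=\Phi_{\mathcal{X},k}A$ for some $A\in\mathbb{R}^{k\times d}$. Since $\Phi_{\mathcal{X},k}^\dagger\Phi_{\mathcal{X},k}=I$, this coefficient matrix is $A=\Phi_{\mathcal{X},k}^\dagger F_{\mathcal{X}}$, so $\Phi_{\mathcal{X},k}\Phi_{\mathcal{X},k}^\dagger F_{\mathcal{X}}=F_{\mathcal{X}}$. Because $G_{\mathcal{X}}$ is diagonal with entries in $(0,1]$, it is invertible. The invertibility property of the learnable basis gives $\Psi_{\mathcal{X},k}\Psi_{\mathcal{X},k}^\dagger=\Phi_{\mathcal{X},k}G_{\mathcal{X}}G_{\mathcal{X}}^{-1}\Phi_{\mathcal{X},k}^\dagger=\Phi_{\mathcal{X},k}\Phi_{\mathcal{X},k}^\dagger$. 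Hence $\Psi_{\mathcal{X},k}\Psi_{\mathcal{X},k}^\dagger F_{\mathcal{X}}=F_{\mathcal{X}}$, and the same holds on $\mathcal{Y}$.

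\textbf{Step 2 (zero residual).} Using Step 1 and then condition (a),
\begin{equation*}
\Psi_{\mathcal{Y},k}^\dagger\Pi_{\mathcal{YX}}\Psi_{\mathcal{X},k}\Psi_{\mathcal{X},k}^\dagger F_{\mathcal{X}}=\Psi_{\mathcal{Y},k}^\dagger\Pi_{\mathcal{YX}}F_{\mathcal{X}}=\Psi_{\mathcal{Y},k}^\dagger F_{\mathcal{Y}}.
\end{equation*}
So the projected map attains objective value $0$, which is the global minimum of a nonnegative functional.

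\textbf{Step 3 (uniqueness).} With $\lambda=0$, the objective is a least-squares problem in $C^{\mathrm{A}}_{\mathcal{XY}}$ with design matrix $B:=\Psi_{\mathcal{X},k}^\dagger F_{\mathcal{X}}=G_{\mathcal{X}}^{-1}\Phi_{\mathcal{X},k}^\dagger F_{\mathcal{X}}$. By condition (c), $\Phi_{\mathcal{X},k}^\dagger F_{\mathcal{X}}$ has full row rank $k$; this requires $d\ge k$, which is the meaningful reading of ``full rank''. Multiplying by the invertible $G_{\mathcal{X}}^{-1}$ keeps rank $k$, so $BB^\top$ is invertible. The normal equations then give the unique minimizer $C^{\mathrm{A}}_{\mathcal{XY}}=\Psi_{\mathcal{Y},k}^\dagger F_{\mathcal{Y}}B^\top(BB^\top)^{-1}$. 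Since the projected map attains zero residual, it is a minimizer, and by uniqueness it equals this closed form. This proves the equivalence.

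\textbf{Main obstacle.} The main difficulty is not algebraic but a consistency issue. The main-text statement~\cref{equ:desc and reg adj} writes the data term with $\mathcal{X}$ and $\mathcal{Y}$ swapped relative to~\cref{equ: compute C by Pi adj}. I will work with the supplementary form, which matches the direction of $\Pi_{\mathcal{YX}}$. I will also state explicitly that ``full rank'' in (c) means rank $k$, i.e.\ $d\ge k$, and that the rank is preserved under the diagonal rescaling by $G$; the uniqueness argument needs exactly this.
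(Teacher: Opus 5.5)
Your proposal is correct and follows the paper's proof essentially step for step. Like the paper, you use condition (b) with $\Phi_{\cdot,k}\Phi_{\cdot,k}^{\dagger}=\Psi_{\cdot,k}\Psi_{\cdot,k}^{\dagger}$ to rewrite condition (a), and apply $\Psi_{\mathcal{Y},k}^{\dagger}$ to show the projected map satisfies $C^{\mathrm{A}}_{\mathcal{XY}}\Psi_{\mathcal{X},k}^{\dagger}F_{\mathcal{X}}=\Psi_{\mathcal{Y},k}^{\dagger}F_{\mathcal{Y}}$. You then invoke condition (c) for uniqueness. Your version is slightly more careful in making explicit that uniqueness needs full \emph{row} rank $k$ (hence $d\ge k$), and that the rescaling by $G_{\mathcal{X}}^{-1}$ preserves it; the paper leaves both implicit.
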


\begin{proof}
By condition (b) and $F_{\cdot} = \Phi_{\cdot, k} \Phi_{\cdot, k}^{\dagger} F_{\cdot}$, where $\Phi_{\cdot, k}$ is the \textit{inverse} Fourier transform, then we rewrite condition (a) as
\begin{equation*}\label{eq: Functional maps in Lap}
 {\Pi}_{\mathcal{YX}} \Phi_{\mathcal{X},k} \Phi_{\mathcal{X}, k}^{\dagger} F_{\mathcal{X}}  = \Phi_{\mathcal{Y},k} \Phi_{\mathcal{Y}, k}^{\dagger} F_{\mathcal{Y}}.
\end{equation*}
Since $\Phi_{\cdot,k} \Phi_{\cdot, k}^{\dagger} = \Phi_{\cdot,k} G G^{-1}\Phi_{\cdot, k}^{\dagger}= \Psi_{\cdot,k} \Psi_{\cdot, k}^{\dagger}$, then 
\begin{equation*}
 {\Pi}_{\mathcal{YX}} \Psi_{\mathcal{X},k} \Psi_{\mathcal{X}, k}^{\dagger} F_{\mathcal{X}}  = \Psi_{\mathcal{Y},k} \Psi_{\mathcal{Y}, k}^{\dagger} F_{\mathcal{Y}}.
\end{equation*}
Multiplying the equation by $ \Psi_{\mathcal{Y}, k}^{\dagger}$, we have
\begin{equation*}
 \Psi_{\mathcal{Y}, k}^{\dagger}{\Pi}_{\mathcal{YX}} \Psi_{\mathcal{X},k} \Psi_{\mathcal{X}, k}^{\dagger} F_{\mathcal{X}}  = \Psi_{\mathcal{Y}, k}^{\dagger} F_{\mathcal{Y}}.
\end{equation*}
On the other hand, from~\cref{equ:desc and reg adj in app} with $\lambda =0$, we have ${C}^{\mathrm{A}}_\mathcal{XY} \Psi_{\mathcal{X}, k}^{\dagger} F_{\mathcal{X}} = \Psi_{\mathcal{Y}, k}^{\dagger} F_{\mathcal{Y}}$. Meanwhile, $\Phi_{\cdot,k}^{\dagger} F_{\cdot}\in \mathbb{R}^{k\times d}$ is full rank by condition (c), so that the solution from~\cref{equ:desc and reg adj in app} and~\cref{equ: compute C by Pi adj in app} is unique.
\end{proof}


Interestingly, the validity of the above derivations does not depend on the inhibition functions. This implies that the properties of functional maps that hold for the original basis functions also hold for the learned basis functions.

\subsection{Enforcing orthogonality of the advanced functional map}\label{sec:nn_FMsolver}


\begin{unnumberedthm}
An alternative formulation of the orthogonality preservation is derived, namely, 
\begin{equation*}\label{equ:area preservation variation adj in app}
E({\Pi}_{\mathcal{YX}},C^{\mathrm{A}}_\mathcal{XY},G) = \left\|\Psi_{\mathcal{Y},k} - {\Pi}_{\mathcal{YX}}\Psi_{\mathcal{X},k} (C^{\mathrm{A}}_\mathcal{XY})^{\top}\right\|^{2}_{\mathrm{F}}, 
\end{equation*}
which not only enforces orthogonality but also penalizes the image of $\Pi_\mathcal{YX}$ lying outside $\Psi_\mathcal{Y}$ via a regularizer, i.e, 
\begin{small}
\begin{equation*}\label{eq:add_reg}
\begin{aligned}
    &\mathcal{R}(\Pi_\mathcal{YX}, C^{\mathrm{A}}_\mathcal{XY}) =\\ &\left\| \Bigl( I- \Psi_{\mathcal{Y},k} \Psi^{\dagger}_{\mathcal{Y},k} \Bigr) \Bigl(\Psi_{\mathcal{Y},k} - {\Pi}_{\mathcal{YX}}\Psi_{\mathcal{X},k} (C^{\mathrm{A}}_\mathcal{XY})^{\top}\Bigr) \right\|^2_{M_{\mathcal{Y}}}.
\end{aligned}
\end{equation*}
\end{small}
\end{unnumberedthm}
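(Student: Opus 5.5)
The approach is to mimic the ZoomOut derivation of \cite{Melzi2019}: split the residual in the mass-weighted norm into its component inside the span of $\Psi_{\mathcal{Y},k}$ and its component orthogonal to it, then apply Pythagoras. Write $R := \Psi_{\mathcal{Y},k} - \Pi_{\mathcal{YX}}\Psi_{\mathcal{X},k}(C^{\mathrm{A}}_{\mathcal{XY}})^{\top}$ and $P := \Psi_{\mathcal{Y},k}\Psi^{\dagger}_{\mathcal{Y},k}$. By the invertibility property, $\Psi^{\dagger}_{\mathcal{Y},k}\Psi_{\mathcal{Y},k}=I$, so $P$ is idempotent. Since $\Psi^{\dagger}_{\mathcal{Y},k}=G^{-1}\Phi^{\top}_{\mathcal{Y},k}M_{\mathcal{Y}}$, we also have $P=\Phi_{\mathcal{Y},k}\Phi^{\top}_{\mathcal{Y},k}M_{\mathcal{Y}}$ (the $G$'s cancel, exactly as in the equivalence proof of \cref{sec:equ_sam_in_supp}). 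Hence $P$ is the $M_{\mathcal{Y}}$-orthogonal projector onto $\mathrm{span}(\Phi_{\mathcal{Y},k})=\mathrm{span}(\Psi_{\mathcal{Y},k})$, and
\begin{equation*}
\|R\|^2_{M_{\mathcal{Y}}}=\|PR\|^2_{M_{\mathcal{Y}}}+\|(I-P)R\|^2_{M_{\mathcal{Y}}}.
\end{equation*}
The second term is exactly the regularizer $\mathcal{R}(\Pi_{\mathcal{YX}},C^{\mathrm{A}}_{\mathcal{XY}})$. It remains to identify the first term with the orthogonality energy \cref{equ:orth preservation adj}.

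For the in-span part, write $PR=\Psi_{\mathcal{Y},k}\,\Psi^{\dagger}_{\mathcal{Y},k}R$. Using $\Psi^{\dagger}_{\mathcal{Y},k}\Psi_{\mathcal{Y},k}=I$ and the substitution $C^{\mathrm{A}}_{\mathcal{XY}}=\Psi^{\dagger}_{\mathcal{Y},k}\Pi_{\mathcal{YX}}\Psi_{\mathcal{X},k}$, we get
\begin{equation*}
\Psi^{\dagger}_{\mathcal{Y},k}R = I - C^{\mathrm{A}}_{\mathcal{XY}}(C^{\mathrm{A}}_{\mathcal{XY}})^{\top}.
\end{equation*}
Then $\|PR\|^2_{M_{\mathcal{Y}}}=\mathrm{tr}\bigl(B^{\top}\Psi^{\top}_{\mathcal{Y},k}M_{\mathcal{Y}}\Psi_{\mathcal{Y},k}B\bigr)$ with $B=I-C^{\mathrm{A}}(C^{\mathrm{A}})^{\top}$. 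By orthogonality preservation, $\Psi^{\top}_{\mathcal{Y},k}M_{\mathcal{Y}}\Psi_{\mathcal{Y},k}=G^2$. This yields $\|G(I-C^{\mathrm{A}}_{\mathcal{XY}}(C^{\mathrm{A}}_{\mathcal{XY}})^{\top})\|^2_{\mathrm{F}}$, i.e.\ the orthogonality energy \cref{equ:orth preservation adj} weighted row-wise by the inhibition values $g_i^2\in(0,1]$. Thus $E=\|G(I-C^{\mathrm{A}}(C^{\mathrm{A}})^{\top})\|_{\mathrm{F}}^2+\mathcal{R}$.

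The main obstacle is this weighting by $G$. Because $\Psi$ is orthogonal but not orthonormal, the first term is not literally \cref{equ:orth preservation adj} unless $G=I$. I would handle this in one of two ways. The first is to interpret ``enforces orthogonality'' as saying that the weighted energy vanishes iff $C^{\mathrm{A}}(C^{\mathrm{A}})^{\top}=I$, which holds since every $g_i>0$. The second is to bound it two-sidedly: $(\min_i g_i^2)\|I-C^{\mathrm{A}}(C^{\mathrm{A}})^{\top}\|^2_{\mathrm{F}}\le\|G(\cdot)\|^2_{\mathrm{F}}\le\|I-C^{\mathrm{A}}(C^{\mathrm{A}})^{\top}\|^2_{\mathrm{F}}$, so the two energies are equivalent as penalties. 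I would also state explicitly that the Frobenius norm in $E$ is read as the $M_{\mathcal{Y}}$-norm, consistent with the regularizer. That is the convention needed for the Pythagorean split, and it matches the ZoomOut setting.
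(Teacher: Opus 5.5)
Your proposal is correct and follows the paper's proof. The paper establishes the same split $\|R\|^2_{M_{\mathcal{Y}}} = \|G_{\mathcal{Y}}\Psi^{\dagger}_{\mathcal{Y},k}R\|^2_{\mathrm{F}} + \mathcal{R}(\Pi_{\mathcal{YX}},C^{\mathrm{A}}_{\mathcal{XY}})$, whose first term is exactly your $\|G(I-C^{\mathrm{A}}(C^{\mathrm{A}})^{\top})\|^2_{\mathrm{F}}$; it verifies the identity by expanding traces with $\Psi^{\dagger}_{\mathcal{Y},k}=G^{-2}_{\mathcal{Y}}\Psi^{\top}_{\mathcal{Y},k}M_{\mathcal{Y}}$, where you use the $M_{\mathcal{Y}}$-orthogonal projector. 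The paper removes the $G$-weighting with the same zero-set observation you give first, and then simply asserts that the minimizers over $\Pi_{\mathcal{YX}}$ of the $M_{\mathcal{Y}}$-norm and Frobenius-norm versions coincide, so your explicit norm convention and two-sided bound are, if anything, more careful than the original.
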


\begin{proof} To begin with, we show that \scalebox{0.90}{
$\left\| X \right\|^2_{M_{\mathcal{Y}}} =
\left\| G_\mathcal{Y} \Psi^{\dagger}_{\mathcal{Y},k} X\right\|^2_{\mathrm{F}} + \left\| \Bigl( I- \Psi_{\mathcal{Y},k} \Psi^{\dagger}_{\mathcal{Y},k} \Bigr) X \right\|^2_{M_{\mathcal{Y}}}$}, where $X = \Psi_{\mathcal{Y},k} - {\Pi}_{\mathcal{YX}}\Psi_{\mathcal{X},k} (C^{\mathrm{A}}_\mathcal{XY})^{\top}$.

On the left, $\left\| X \right\|^2_{M_{\mathcal{Y}}} = tr(X^{\top} M_{\mathcal{Y}}X)$. 

For the first term on the right side, \scalebox{0.90}{ $\left\| G_\mathcal{Y} \Psi^{\dagger}_{\mathcal{Y},k} X\right\|^2_{\mathrm{F}} = tr \Bigl(  (G_\mathcal{Y} \Psi^{\dagger}_{\mathcal{Y},k} X)^{\top} G_\mathcal{Y} \Psi^{\dagger}_{\mathcal{Y},k} X\Bigl) $},
since $ \Psi^{\dagger}_{\mathcal{Y},k} = G^{-2}_{\mathcal{Y}} \Psi^\top_{\mathcal{Y},k}M_{\mathcal{Y}}$.
we have $\left\| G_\mathcal{Y} \Psi^{\dagger}_{\mathcal{Y},k} X\right\|^2_{\mathrm{F}} = tr \Bigl( X^{\top} M_\mathcal{Y} \Psi_{\mathcal{Y},k} \Psi^{\dagger}_{\mathcal{Y},k} X\Bigl)$. 

For the second term on the right side, $\left\| \Bigl( I- \Psi_{\mathcal{Y},k} \Psi^{\dagger}_{\mathcal{Y},k} \Bigr) X \right\|^2_{M_{\mathcal{Y}}}= tr \Bigl( X^{\top} (I-\Psi_{\mathcal{Y},k} \Psi^{\dagger}_{\mathcal{Y},k})^{\top} M_\mathcal{Y} (I-\Psi_{\mathcal{Y},k} \Psi^{\dagger}_{\mathcal{Y},k}) X \Bigl) = tr \Bigl(X^{\top}(I- M_{\mathcal{Y}} \Psi_{\mathcal{Y},k} G^{-2}_\mathcal{Y} \Psi^{\top}_{\mathcal{Y},k} ) M_{\mathcal{Y}} (I-  \Psi_{\mathcal{Y},k} G^{-2}_\mathcal{Y}  \Psi^{\top}_{\mathcal{Y},k} M_{\mathcal{Y}} )   X  \Bigl) = tr \Bigl( X^{\top} (  M_{\mathcal{Y}} -  M_{\mathcal{Y}}\Psi_{\mathcal{Y},k} \Psi^{\dagger}_{\mathcal{Y},k} )  X  \Bigl)$. 

Then, we have $\arg\min_{\Pi_{\mathcal{YX}}} \left\| X \right\|^2_{M_{\mathcal{Y}}} = \arg\min_{\Pi_{\mathcal{YX}}} \left\| G_\mathcal{Y} \Psi^{\dagger}_{\mathcal{Y},k} X\right\|^2_{\mathrm{F}} + \left\| \Bigl( I- \Psi_{\mathcal{Y},k} \Psi^{\dagger}_{\mathcal{Y},k} \Bigr) X \right\|^2_{M_{\mathcal{Y}}} $ with $X = \Psi_{\mathcal{Y},k} - {\Pi}_{\mathcal{YX}}\Psi_{\mathcal{X},k} (C^{\mathrm{A}}_\mathcal{XY})^{\top}$.

Clear, $\arg\min_{\Pi_{\mathcal{YX}}} \left\| G_\mathcal{Y} \Psi^{\dagger}_{\mathcal{Y},k} X\right\|^2_{\mathrm{F}} = \arg\min_{\Pi_{\mathcal{YX}}} \left\|  \Psi^{\dagger}_{\mathcal{Y},k} X\right\|^2_{\mathrm{F}}$ since $\Psi_{\mathcal{Y}, k}^{\dagger}{\Pi}_{\mathcal{YX}} \Psi_{\mathcal{X},k} (C^{\mathrm{A}}_\mathcal{XY})^{\top}={I}$ is equal to $G_\mathcal{Y}\Psi_{\mathcal{Y}, k}^{\dagger}{\Pi}_{\mathcal{YX}} \Psi_{\mathcal{X},k} (C^{\mathrm{A}}_\mathcal{XY})^{\top}={G_\mathcal{Y}}$ for a pointwisp map ${\Pi}_{\mathcal{YX}}$, with $G_\mathcal{Y}$ is a diagonal full rank matrix. Moreover we have $\arg\min_{\Pi_{\mathcal{YX}}} \left\| X \right\|^2_{M_{\mathcal{Y}}} = \arg\min_{\Pi_{\mathcal{YX}}} \left\| X \right\|^2_{F}$.
\end{proof}


\paragraph{Refinement}
We propose a spectral upsampling alternating refinement algorithm, dubbed G-ZoomOut. See~\cref{alg: G-ZoomOut refinement}.
\begin{algorithm}
\caption{: G-ZoomOut refinement}
\label{alg: G-ZoomOut refinement}
\begin{algorithmic}[1]
\State \textbf{Input:} $k_{init}$, $k_{end}$, and  $\Pi^{init}_\mathcal{YX}$
\State \textbf{Output:} $\Pi_\mathcal{YX}$
\For{$k = k_{init},k_{init}+1,\cdots, k_{end} $}
    \State ${C}^{\mathrm{A}}_\mathcal{XY}=\Psi_{\mathcal{Y},k}^\dagger\Pi_\mathcal{YX}\Psi_{\mathcal{X},k}$
    \State $\Pi_\mathcal{YX}=\text{NS}(\Psi_{\mathcal{Y},k},\Psi_{\mathcal{X},k}({C^{\mathrm{A}}_\mathcal{XY}})^{\top})$  
\EndFor
\end{algorithmic}
\end{algorithm}

In summary, the advanced functional map framework provides a systematic theoretical grounding for basis function optimization while preserving full compatibility with conventional functional map representations. This principled formulation allows immediate deployment through standard functional map pipelines without requiring structural modifications.

\section{Experiments and Results}\label{sec:results}
\subsection{Implementation details} 
For feature learning, we use DiffusionNet~\cite{sharp2022diffusionnet} as the feature extractor with its default settings, which employs 16-dimensional HKS features~\cite{sun2009concise} as input and generates 128-dimensional learned features for the network. For basis function learning, \textit{only} 200 diffusion times are required for the proposed heat convolution network, which is identical to the maximum number of truncated eigenvectors. For pointwise map computation, we set $\alpha=0.07$. For functional map computation, we use truncated eigensystems with $k=\{100,150,200\}$, where ${k_{init}=100}$ and ${k_{end}=200}$. For training, we use the Adam optimizer~\cite{Kingma15} with a learning rate of $0.001$ for all learning parameters. 

\begin{table}[h!t]
\centering
\caption{Topological noise on TOPKIDS\cite{lahner2016shrec}. The table presents the mean geodesic errors ($\times 100$). \textbf{Bold}: Best. \underline{Underline}: Runner-up.}
\scalebox{.9}{
\begin{tabular}{lrc}
\hline
                    & \multicolumn{1}{c}{TOPKIDS} & \multicolumn{1}{c}{Fully intrinsic} \\ \hline
                    \multicolumn{3}{c}{Axiomatic Methods}    \\
ZoomOut~\cite{Melzi2019}             & 33.7        &  {\color{green}\ding{51}}    \\
SmoothShells~\cite{eisenberger2020smooth}        & 11.8        & {\color{red}\ding{55}}             \\
DiscreteOp~\cite{ren2021discrete}           & 35.5        & {\color{green}\ding{51}}              \\
\hline
                    \multicolumn{3}{c}{Unsupervised Methods}     \\
UnsupFMNet~\cite{Halimi_2019}          & 38.5        & {\color{green}\ding{51}}  \\
SURFMNet~\cite{Roufosse_2019}            & 48.6        & {\color{green}\ding{51}}               \\
Deep Shells~\cite{eisenberger2020deep}         & 13.7        & {\color{red}\ding{55}}             \\
NeuroMorph~\cite{Eisenberger2021}          & 13.8        & {\color{red}\ding{55}}           \\
AttentiveFMaps~\cite{li2022learning}      & 23.4        & {\color{green}\ding{51}}              \\
ULRSSM~\cite{Cao2023}              & 9.4   & {\color{green}\ding{51}}         \\
ULRSSM(w.FT)~\cite{Cao2023}        & 9.2         & {\color{green}\ding{51}}     \\
HybridFMaps~\cite{bastian2024hybrid}             & \textbf{5.0}   & {\color{red}\ding{55}}         \\
HybridFMaps (w.FT)~\cite{bastian2024hybrid}        & 5.1         & {\color{red}\ding{55}}    \\
DeepFAFM~\cite{luo2025deep}            & {6.3}         & {\color{green}\ding{51}}     \\
DenoisFMaps~\cite{zhuravlev2025denoising}               & 43.6   & {\color{green}\ding{51}}    \\
\cellcolor{TRColor}{\textbf{Ours}}                & \cellcolor{TRColor}{\underline{5.9}}  & \cellcolor{TRColor}{{\color{green}\ding{51}}}  \\
\hline
\end{tabular}}
\label{tab: topo}
\end{table}

\subsection{Matching with topological noise} 
To evaluate robustness against topological perturbations, we conducted experiments on the SHREC'16 TOPKIDS dataset~\cite{lahner2016shrec}, which contains 26 shapes including one pristine model and 25 variants with varying topological noise levels. The quantitative results are summarized in~\cref{tab: topo}. Since topological noise distorts the intrinsic geometry of shapes, many purely intrinsic methods, such as AttentionFMaps~\cite{li2022learning} and DenoiseFMaps~\cite{zhuravlev2025denoising}, struggle to produce satisfactory results under such conditions. In contrast, methods that incorporate extrinsic information, such as DeepShell and HybridFMaps, often demonstrate improved robustness and accuracy. However, our approach outperforms all purely intrinsic methods, including the current leading techniques such as ULRSSM~\cite{Cao2023}, fine-tuned ULRSSM, and DeepFAFM~\cite{luo2025deep}. In contrast, HybridFMaps achieves notable performance by incorporating elastic basis functions~\cite{2023ElasticBasis} that integrate extrinsic information into ULRSSM. A comparative analysis shows that these improvements in HybridFMaps heavily rely on extrinsic cues. On the other hand, our purely intrinsic method delivers performance competitive with HybridFMaps, highlighting the superior effectiveness of our approach

\begin{table}[h!t]
\centering
\caption{Runtime comparison in seconds with different vertex resolutions. Fmap computation manner: Average runtime per method (100 iterations); Training: Average training time per method (100 iterations). Inference: Average inference time per method (100 shape pairs). All the statistics were collected on a server with Intel(R) Xeon(R) Platinum 8358 CPU @ 2.60GHz, and a single NVIDIA A100-958 SXM4-80GB GPU.}
\scalebox{.7}{
\begin{tabular}{lrrrrr}
\hline
Method                    & \multicolumn{1}{r}{5K} & \multicolumn{1}{r}{8K} & \multicolumn{1}{r}{10K}& \multicolumn{1}{r}{12K}& \multicolumn{1}{r}{15K}\\ \hline
                   \multicolumn{4}{r}{Fmap computation manner}     \\
$a$. Fmap solver             & 21.94       & 22.30  & 22.44  & 22.64  & 22.77              \\
$b$. Softmax+spec proj          & 0.13       & 0.28  & 0.41  & 0.57  & 0.87               \\

\hline
                    \multicolumn{4}{r}{Training}     \\
ULRSSM$^{a,b}$               & 66.43       & 76.95  & 87.90  & 99.10  & 121.92      \\
DiffZO                 & 24.81       & 37.92  & 45.83  & 54.67  & 71.21 \\
DeepFAFM$^{a,b}$            & 68.73      & 77.78  & 88.23  & 99.29  & 122.10     \\
Ours$^{b}$              & 17.57       & 28.17  & 38.95  & 49.57  & 67.60   \\
\hline
                    \multicolumn{4}{r}{Inference}     \\
ULRSSM               & 15.52      & 28.30  & 42.37  & 56.45  & 87.35      \\
ULRSSM(w.fineturn)        & 924.29       & 1029.38  & 1126.26  & 1236.79  & 1422.40     \\
DiffZO               & 49.03     & 56.92  & 61.25  & 67.16  & 78.40 \\
DeepFAFM            & 20.36     & 34.10  & 47.64  & 63.96  & 92.25     \\
Ours              & 14.37      & 28.60  & 43.48  & 60.08  & 91.98   \\
\hline
\end{tabular}}
\label{tab: runtime}
\end{table}
\subsection{Runtime comparison}\label{sec:runtime}
We evaluate the training and inference time of our method across datasets with varying vertex resolutions, and additionally report the computational costs of two prevalent functional map computation methods: the functional map solver$^{a}$ and the softmax-based spectral projector$^{b}$. All competitors are implemented using their original parameter configurations as reported in respective publications. The quantitative results are summarized in~\cref{tab: runtime}. 

Our method demonstrates the shortest training time, exhibiting over twice the speed of competing approaches like ULRSSM. Since both ULRSSM and DeepFAFM employ two computational approaches$^{a,b}$  bidirectionally for functional maps computation, where the functional map solver method$^{a}$ proves significantly time-consuming (see top section of~\cref{tab: runtime}), their training processes demand substantial computational time. While DiffZO currently stands as the most efficient functional maps learning method by eliminating time-consuming functional maps solvers (alongside our approach), it incurs additional computational overhead through implicit iterative solving of both pointwise and functional maps. In contrast, our framework achieves exceptional computational efficiency via the softmax-based spectral projector$^b$, contributing substantially to the superior training efficiency of our approach.

In inference time comparisons, our method achieves results comparable to ULRSSM and DeepFAFM, as all three approaches share broadly similar computational pipelines. In contrast, DiffZO incurs significantly longer inference times due to its iterative ZoomOut-style upsampling. While its implicit representation benefits from the KeOps acceleration library~\cite{charlier2021kernel}, yielding superior efficiency on high-resolution meshes (e.g., 15K vertices), this does not compensate for its overall latency. Notably, ULRSSM's fine‑tuning strategy improves accuracy at the cost of a 20× inference-time increase. Crucially, without employing any fine-tuning, our method surpasses even the fine‑tuned variant of ULRSSM in final accuracy, demonstrating its efficiency and effectiveness jointly.

\subsection{Ablation Studies}\label{sec: abl}
We conduct ablation studies across multiple datasets, covering non-isometric and topological noise scenarios, to evaluate the performance of our proposed components: basis learning and the multi-scale spectral loss (MSLoss).

The results are presented in~\cref{tab: ablation studies}. By comparing the first and last rows, we see that both the basis learning module and the MSLoss term contribute significantly to improving matching performance, primarily by enhancing spectral basis representations across diverse settings. When comparing the second and last rows, it becomes clear that the basis learning module plays a critical role in the method's success. Its absence leads to a noticeable drop in performance across various matching tasks. Lastly, comparing the third and last rows reveals that, while MSLoss only slightly improves non-isometric matching accuracy, it substantially enhances robustness against topological noise.

\begin{table}[h!t]
\centering
\caption{We conduct ablation studies on remeshed FAUST, SHREC'19, DT4D-H, and TOPKIDS using two ablated configurations: "w.o. basis learning" removes basis optimization by fixing $G \equiv I$ (reverting to handcrafted Laplacian basis), while "w.o. MSLoss" uses single-resolution loss at $k = 200$.}
\scalebox{0.85}{
\begin{tabular}{lrr}
\hline
\multicolumn{1}{l}{Settings}            &  \multicolumn{1}{c}{DT4D-H inter} & \multicolumn{1}{c}{TOPKIDS} \\ \hline
w.o basis learning and MSLoss      & 8.1  & 19.7  \\ 
w.o basis learning        & 4.9  & 16.6  \\ 
w.o MSLoss         & 3.7  & 10.3  \\ 
Ours                & 3.5   & 5.9   \\ 
\hline
\end{tabular}}
\label{tab: ablation studies}
\end{table}

\subsection{Parameter analysis} 
Due to our lightweight design, our method requires only a minimal set of manually configured hyperparameters: the number of Laplacian basis functions, diffusion times, and the scaling factor $\alpha$. Since our multi-resolution loss already incorporates sampling across resolutions, and the number of diffusion times matches the maximum number of Laplacian basis functions. We only present the impact of the scaling factor $\alpha$ on our method's performance. The results are summarized in~\cref{tab: parameter analysis}. The results indicate that the accuracy of our method exhibits only minor variations with changes in $\alpha$, strongly demonstrating its robustness. Consequently, we set $\alpha=0.07$, aligning with the configuration used in methods such as ULRSSM, HybridFMaps, and DeepFAFM.

\begin{table}[h!t]
\centering
\caption{Parameter analysis on remeshed SMAL~\cite{zuffi20173d}.}
\scalebox{1.0}{
\begin{tabular}{lr}
\hline
$\alpha$                & SMAL                \\ \hline
$\alpha$ =10     & 2.9       \\ 
$\alpha$ =1      & 2.6     \\
$\alpha$=0.1    & 2.6  \\
$\alpha$=0.07               & 2.6      \\ 
$\alpha$=0.01               & 2.7      \\ 
\hline
\end{tabular}}
\label{tab: parameter analysis}
\end{table}

\section{Inhibition Function Analysis}\label{sec:Inhibition_analysis}
The key to optimizing basis functions lies in learning a set of inhibition functions. To further investigate the role of the inhibition function, we visualize its learned quantitative results across different datasets in~\cref{fig:Inhibition_functions}. 
We draw the following key observations:
(1) Spectral Matching Hierarchy: Lower-frequency Laplacian eigenfunctions demonstrate greater significance than their higher-frequency counterparts in shape correspondence. Visualized inhibition functions consistently exhibit stronger attenuation on high-frequency components. (2) Near-isometric Adaptation: Under mild deformations (FAUST/SCAPE), high-frequency suppression remains moderate. The inhibition function shows gradual decay, with penalty factors below 0.5 appearing only beyond $\{\phi_k\}_{k\geq 100}$. (3) Non-isometric Response: Severe deformations (SMAL/DT4D-H) necessitate aggressive high-frequency suppression. The inhibition function drops sharply, applying sub-0.5 penalties to components beyond $\{\phi_k\}_{k\geq 10}$. (4) Topological Noise Characterization: TOPKIDS exhibits intermediate behavior—steeper decay than FAUST/SCAPE yet gentler than SMAL/DT4D-H, confirming its deformation level bridges near-isometric and strongly non-isometric scenarios. This conclusion aligns with the findings reported in DeepFAFM~\cite{luo2025deep}. 
Our approach adaptively suppresses the influence of noisy basis components according to task-specific requirements, thereby achieving superior performance compared to feature-learning-only approaches.
\begin{figure}[h!t]
	\centering
	\includegraphics[width=0.85\linewidth]{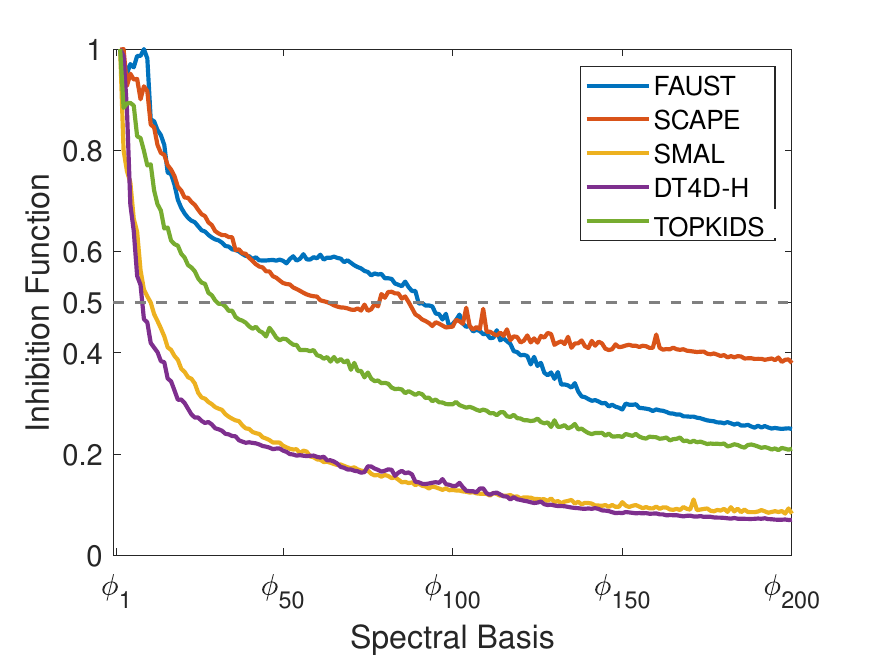}
	\caption{The quantitative results of learned inhibition functions across different datasets.}
	\label{fig:Inhibition_functions}
\end{figure}

\section{Qualitative Results}\label{sec:visualization}
In this section, we provide the qualitative results of our method on SHREC'19 (see~\cref{fig5: iso_gen}), SMAL, DT4D-H (see~\cref{fig6: non_iso}), and TOPKIDS (see~\cref{fig7: topo}) corresponding to the quantitative results reported in the main text.

\section{Future Work}\label{sec:limitation}
Several compelling avenues for future work emerge, such as: i) While this work adopts spectral convolution networks, the inhibition function can alternatively be parameterized by MLPs, recently proposed KANs~\cite{somvanshi2025survey}, attention (Transformer) mechanisms~\cite{chen2024survey,yuan2025survey}, or other machine learning models, all of which offer potential improvements to the basis representation;  
ii) Extending our method to non-orthogonal elastic basis~\cite{2023ElasticBasis}, which encodes extrinsic geometric information, can enhance robustness under non-isometric and topologically noisy settings compared to conventional Laplacian basis. We anticipate this extension will advance matching performance in challenging scenarios; iii) Integrating basis function optimization into partial shape matching frameworks~\cite{attaiki2021dpfm,ehm2024partial,xie2025echomatch} is expected to substantially improve their correspondence accuracy and robustness.

\begin{figure*}[h!t]
	\centering
	\includegraphics[width=0.7\linewidth]{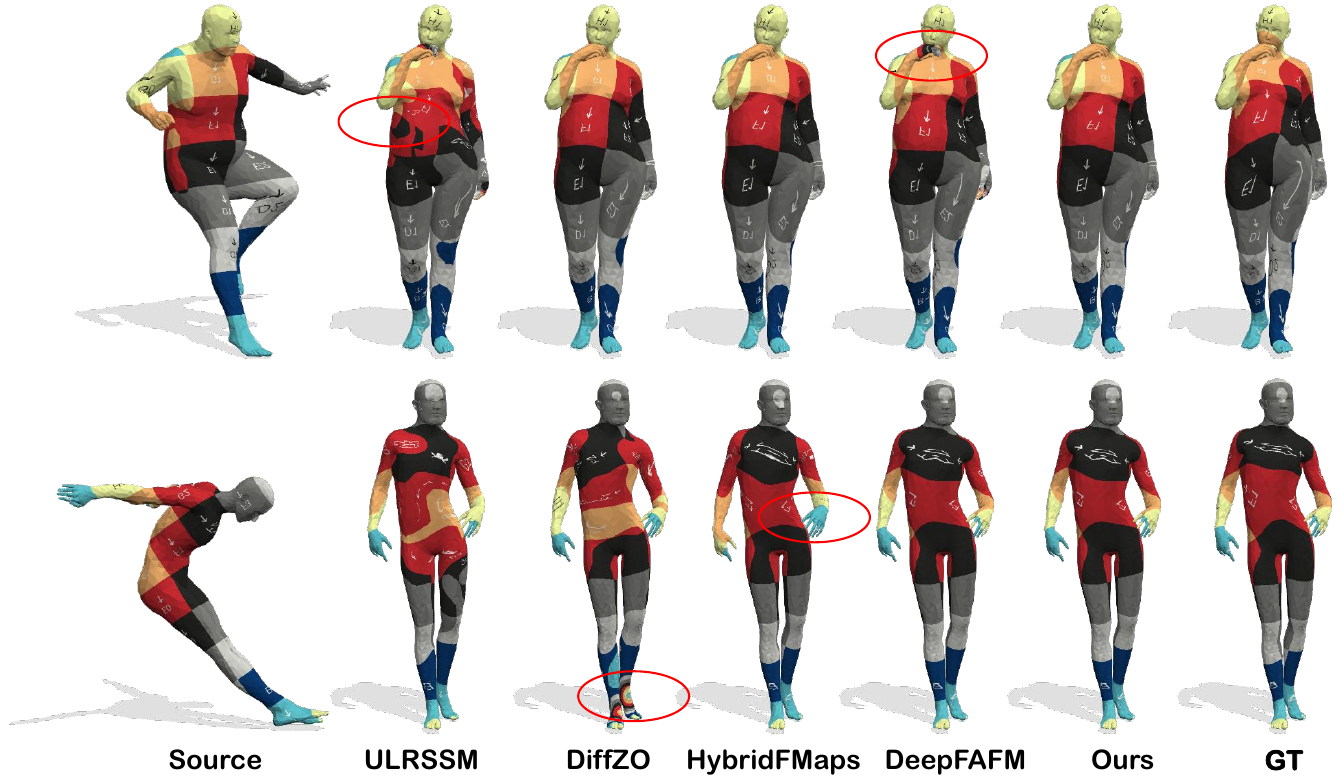}
	\caption{ Comparisons of cross-dataset generalization performance with other methods. Top row: training on FAUST~\cite{ren2018continuous} and testing on SHREC'19~\cite{melzi2019shrec}. Bottom row: Training on SCAPE~\cite{ren2018continuous} and testing on SHREC'19. Our method exhibits fewer errors and less color distortion compared to other approaches, highlighting its robust performance.    
    }
	\label{fig5: iso_gen}
\end{figure*}

\begin{figure*}[h!t]
	\centering
	\includegraphics[width=0.83\linewidth]{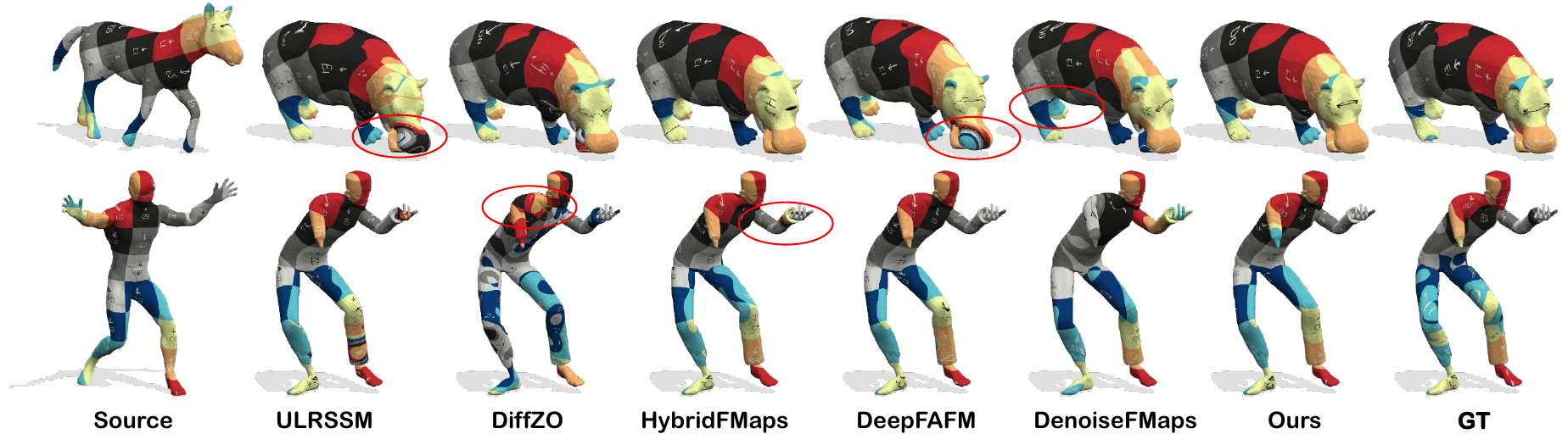}
	\caption{ Comparisons with other methods on the SMAL~\cite{zuffi2017} (top row) and DT4D-H~\cite{2022SmoothNonRigidShapeMatchingviaEffectiveDirichletEnergyOptimization} (bottom row) datasets. Our approach results in fewer errors and less texture distortion than other methods, demonstrating its superior performance for non-isometric shape matching.}
	\label{fig6: non_iso}
\end{figure*}

\begin{figure*}[h!t]
	\centering
	\includegraphics[width=0.7\linewidth]{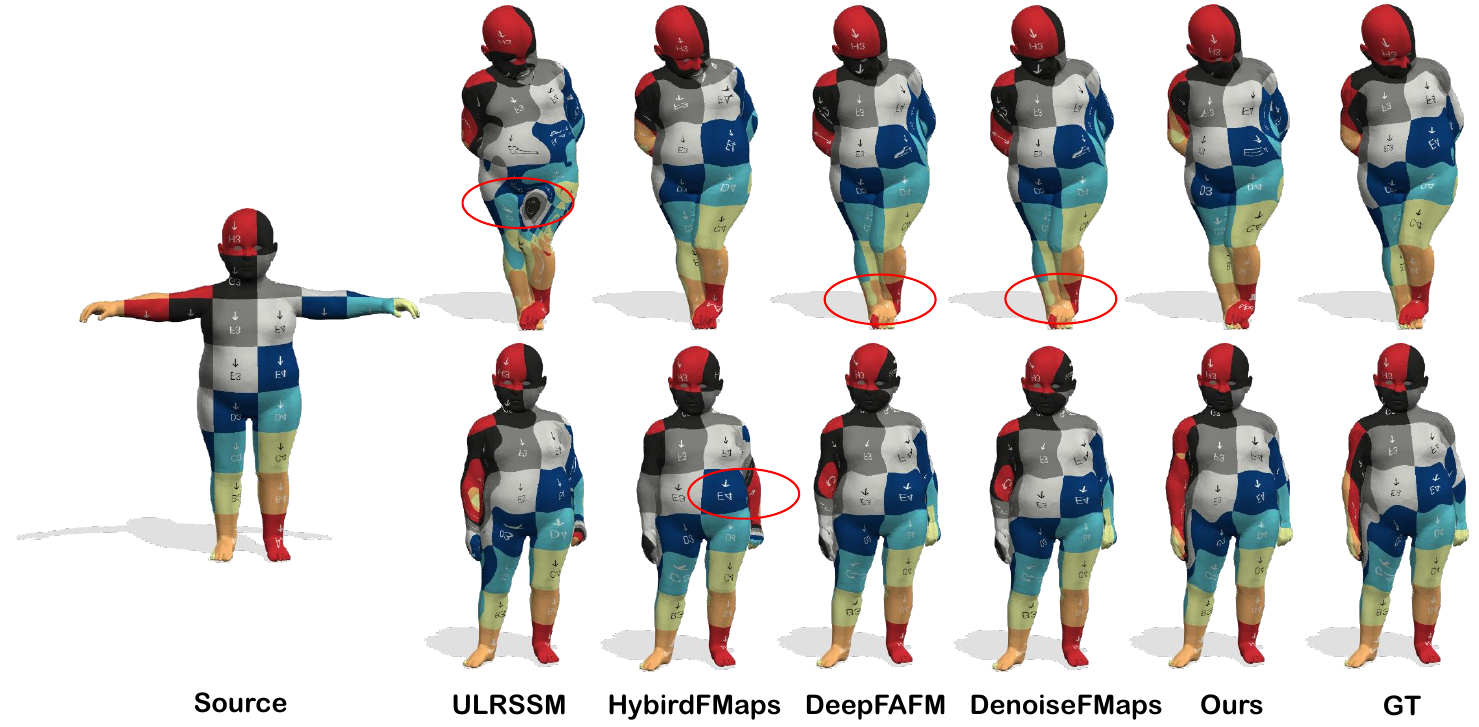}
	\caption{Comparisons with other methods on shape matching with topological noise~\cite{lahner2016shrec}. Our results, with smoother and more accurate texture distributions, illustrate that our approach is more robust to topological noise compared to existing methods
    }\label{fig7: topo}
\end{figure*}


\end{document}